\documentclass[10pt]{article}

\usepackage{fullpage}                   %
\usepackage{authblk}

\usepackage{amsmath, amssymb, amsthm}  %
\usepackage{graphicx}                   %
\usepackage{enumitem}                   %
\usepackage{bm}                         %

\usepackage[T1]{fontenc}
\usepackage{booktabs}
\usepackage[misc]{ifsym}
\usepackage[table]{xcolor}
\usepackage[nosort]{cite}

\usepackage{xcolor}
\usepackage{algorithm}
\usepackage{algorithmic}
\usepackage{booktabs}
\usepackage[hidelinks]{hyperref}
\hypersetup{
    colorlinks=true, %
    linkcolor=darkblue, %
    anchorcolor=darkblue, %
    citecolor=darkblue, %
    filecolor=darkblue, %
    menucolor=darkblue, %
    urlcolor=darkblue %
}

\usepackage[nameinlink,noabbrev]{cleveref}
\usepackage{graphicx}
\usepackage{caption}
\usepackage{subcaption}
\usepackage{adjustbox}
\usepackage[pass]{geometry}
\usepackage{multirow}

\newcommand{\norm}[1]{\|#1\|}

\newcommand{\nsq}[1]{\|#1\|^2}
\newcommand{\bnsq}[1]{\big\|#1 \big\|^2}

\newcommand{\proj}{\mathrm{proj}}
\newcommand{\dist}{\mathrm{dist}}
\newcommand{\ip}[2]{\langle#1,#2\rangle}
\newcommand{\avgn}{\frac{1}{n} \sum_{i=1}^{n}}
\newcommand{\R}{\mathbb{R}}
\newcommand{\argmin}[1]{\underset{#1}{\mathrm{argmin}}}
\newcommand{\x}{\mathbf{x}}

\newcommand{\xb}{\bar{\mathbf{x}}}
\newcommand{\g}{\mathbf{g}}

\newcommand{\s}{\mathbf{s}}

\newcommand{\dom}{\mathcal{D}}
\newcommand{\y}{\mathbf{y}}
\newcommand{\bd}{\mathbf{d}}

\newcommand{\X}{\mathbf{X}}
\newcommand{\Y}{\mathbf{Y}}
\newcommand{\D}{\mathbf{D}}

\newcommand{\Xt}{\mathbf{X}^t}

\newcommand{\Xtt}{\mathbf{X}^{\tilde{t}}}
\newcommand{\Xb}{\bar{\mathbf{X}}}
\newcommand{\Xbt}{\bar{\mathbf{X}}^t}
\newcommand{\Xbtt}{\bar{\mathbf{X}}^{\tilde{t}}}
\newcommand{\Lhat}{\hat{L}_{t}}

\newcommand{\C}{\mathcal{C}}
\newcommand{\gap}{\mathrm{gap}}
\newcommand{\sgap}{\hat{\mathrm{gap}}}
\definecolor{alizarin}{rgb}{0.82, 0.1, 0.26}
\definecolor{amber}{rgb}{1.0, 0.49, 0.0}
\definecolor{darkblue}{rgb}{0.0, 0.0, 0.55} 
\definecolor{ao}{rgb}{0.0, 0.0, 1.0}
\definecolor{ao(english)}{rgb}{0.0, 0.5, 0.0}
\definecolor{bostonuniversityred}{rgb}{0.8, 0.0, 0.0}
\newcommand{\mycolorOne}[1]{{\color{ao}{#1}}}
\newcommand{\mycolorTwo}[1]{{\color{bostonuniversityred}{#1}}}

\newtheorem{theorem}{Theorem}
\newtheorem{lemma}{Lemma}

\theoremstyle{definition}

\newtheorem{remark}{Remark}

\crefname{assumption}{assumption}{assumptions}
\Crefname{assumption}{Assumption}{Assumptions}
\newtheorem{assumption}{Assumption}

\crefname{definition}{definition}{definitions}
\Crefname{definition}{Definition}{Definitions}
\title{Federated Frank-Wolfe Algorithm}
\author[1]{Ali Dadras}
\author[2]{Sourasekhar Banerjee}
\author[1]{Karthik Prakhya}
\author[1]{Alp Yurtsever}
\affil[1]{Department of Mathematics and Mathematical Statistics}
\affil[2]{Department of Computing Science}
\affil[ ]{Umeå University, Sweden}
\affil[ ]{\texttt{\{name.surname\}@umu.se}}
\date{}

\begin{document}

\maketitle

\begin{abstract}
Federated learning (FL) has gained a lot of attention in recent years for building privacy-preserving collaborative learning systems. 
However, FL algorithms for constrained machine learning problems are still limited, particularly when the projection step is costly. 
To this end, we propose a Federated Frank-Wolfe Algorithm (\textsc{FedFW}). 
\textsc{FedFW} features data privacy, low per-iteration cost, and communication of sparse signals. 
In the deterministic setting, \textsc{FedFW} achieves an $\varepsilon$-suboptimal solution within $\mathcal{O}(\varepsilon^{-2})$ iterations for smooth and convex objectives, and $\mathcal{O}(\varepsilon^{-3})$ iterations for smooth but non-convex objectives. 
Furthermore, we present a stochastic variant of \textsc{FedFW} and show that it finds a solution within $\mathcal{O}(\varepsilon^{-3})$ iterations in the convex setting. 
We demonstrate the empirical performance of \textsc{FedFW} on several machine learning tasks. 
\end{abstract}

\noindent\textbf{Keywords:} Federated learning, Frank-Wolfe, Conditional gradient method, Projection-free, Distributed optimization.

\section{Introduction}
\label{sec:Introduction}

We present a new variant of the Frank-Wolfe (FW) algorithm, \textsc{FedFW}, designed for the increasingly popular Federated Learning (FL) paradigm in machine learning. 
Consider the following constrained empirical risk minimization  template:
\begin{equation}\label{eqn:sec:intro:problem def}
    \min_{\x \in \dom} ~~ F(\x):= ~ \frac{1}{n} \sum_{i=1}^n f_i(\x),
\end{equation}
where $\dom \subseteq \R^p$ is a convex and compact set. We define the diameter of $\dom$ as $D:=\max_{\x, \mathbf{y} \in \dom} \|\x-\mathbf{y}\|$. 
The function $F : \R^p \to \R$ represents the objective function, and $f_i : \R^p \to \R$ (for $i = 1,\ldots,n$) represent the loss functions of the clients, where $n$ is the number of clients. 
Throughout, we assume $f_i$ is $L$-smooth, meaning that it has Lipschitz continuous gradients with parameter $L$. 

FL holds great promise for solving optimization problems over a large network, where clients collaborate under the coordination of a server to find a common good model. 
Privacy is an explicit goal in FL; clients work together towards a common goal by utilizing their own data without sharing it. 
As a result, FL exhibits remarkable potential for data science applications involving privacy-sensitive information. 
Its applications range from learning tasks (such as training neural networks) on mobile devices without sharing personal data \cite{lim2020federated} to medical applications of machine learning, where hospitals collaborate without sharing sensitive patient information \cite{wang2021field}. 

Most FL algorithms focus on unconstrained optimization problems, and extending these algorithms to handle constrained problems typically requires projection steps. 
However, in many machine learning applications, the projection cost can create a computational bottleneck, preventing us from solving these problems at a large scale. 
The FW algorithm \cite{frank1956algorithm} has emerged as a preferred method for addressing these problems in machine learning. 
The main workhorse of the FW algorithm is the linear minimization oracle (LMO),
\begin{equation}
    \mathrm{lmo}(\mathbf{y}) := \argmin{\x \in\dom} ~ \ip{\mathbf{y}}{\x}.
\end{equation}
Evaluating linear minimization is generally less computationally expensive than performing the projection step. 
A famous example illustrating this is the nuclear-norm constraint: projecting onto a nuclear-norm ball often requires computing a full-spectrum singular value decomposition. 
In contrast, linear minimization involves finding the top singular vector, a task that can be efficiently approximated using methods such as the power method or Lanczos iterations.

To our knowledge, FW has not yet been explored in the context of FL. This paper aims to close this gap. 
Our primary contribution lies in adapting the FW method for FL with convergence guarantees. 

The paper is organized as follows:
\Cref{section: Related Work} provides a brief review of the literature on FL and the FW method. 
In \Cref{section: Algorithms and Convergence Analysis}, we introduce \textsc{FedFW}. Unlike traditional FL methods, \textsc{FedFW} does not overwrite clients' local models with the global model sent by the server. 
Instead, it penalizes clients' loss functions by using the global model. We present the convergence guarantees of \textsc{FedFW} in \Cref{section:guarantees}. 
Specifically, our method provably finds a $\varepsilon$-suboptimal solution after $\mathcal{O}(\varepsilon^{-2})$ iterations for smooth and convex objective functions (refer to \Cref{theorem:sec:algo. and convergence anal.: convex}). 
In the case of non-convex objectives, the complexity increases to $\mathcal{O}(\varepsilon^{-3})$ (refer to \Cref{theorem:sec:algo. and convergence anal.: non-convex}). 
\Cref{sec:design-variants} introduces several design variations of \textsc{FedFW}, including a stochastic variant. 
\Cref{section: Numerical Experiments} presents numerical experiments on various machine learning tasks with both convex and non-convex objective functions. 
Finally, \Cref{section: Conclusions} provides concluding remarks along with a discussion on the limitations of the proposed method. 
Detailed proofs and technical aspects are deferred to the supplementary material.

\section{Related Work} 
\label{section: Related Work}

\paragraph{Federated Learning.}
FL is a distributed learning paradigm that, unlike most traditional distributed settings, focuses on a scenario where only a subset of clients participate in each training round, data is often heterogeneous, and clients can perform different numbers of iterations in each round \cite{mcmahan2017communication,konevcny2016federated}. 
\textsc{FedAvg} \cite{mcmahan2017communication} has been a cornerstone in the FL literature, demonstrating practical capabilities in addressing key concerns such as privacy and security, data heterogeneity, and computational costs. 
Although it is shown that fixed points of some \textsc{FedAvg} variants do not necessarily converge to the minimizer of the objective function, even in the least squares problem \cite{pathak2020fedsplit}, and can even diverge \cite{zhang2021fedpd}, the convergence guarantees of \textsc{FedAvg} have been studied under different assumptions (see  \cite{stich2018local,li2019convergence,haddadpour2019local,yu2019parallel,li2020federated,woodworth2020local,woodworth2020minibatch,al2020federated} and the references therein). 
However, all these works on the convergence guarantees of \textsc{FedAvg} are restricted to unconstrained problems. 

Constrained or composite optimization problems are ubiquitous in machine learning, often used to impose structural priors such as sparsity or low-rankness. 
To our knowledge, \textsc{FedDR} \cite{tran2021feddr} and \textsc{FedDA} \cite{yuan2021federated} are the first FL algorithms with convergence guarantees for constrained problems. 
The former employs Douglas-Rachford splitting, while the latter is based on the dual averaging method \cite{nesterov2009primal}, to solve composite optimization problems, including constrained problems via indicator functions, within the FL setting. 
\cite{bao2022fast} introduced a `fast' variant of \textsc{FedDA}, achieving rapid convergence rates with linear speedup and reduced communication rounds for composite strongly convex problems. 
\textsc{FedADMM} \cite{wang2022fedadmm} was proposed for federated composite optimization problems involving a non-convex smooth term and a convex non-smooth term in the objective. 
Moreover, \cite{he2023federated} proposed a FL algorithm based on a proximal augmented Lagrangian approach to address problems with convex functional constraints. 
None of these works address our problem template, where the constraints are challenging to project onto but allow for an efficient solution to the linear minimization problem. 

\paragraph{Frank-Wolfe Algorithm.}
The FW algorithm, also known as the conditional gradient method or CGM, was initially introduced in \cite{frank1956algorithm} to minimize a convex quadratic objective over a polytope, and was extended to general convex objectives and arbitrary convex and compact sets in \cite{levitin1966constrained}. Following the seminal works in \cite{hazan2008sparse} and \cite{jaggi2013revisiting}, the method gained popularity in machine learning. 

The increasing interest in FW methods for data science applications has led to the development of new results and variants. 
For example, \cite{lacoste2016convergence} established convergence guarantees for FW with non-convex objective functions. 
Additionally, online, stochastic, and variance-reduced variants of FW have been proposed; see \cite{hazan2012projectionfree,hazan2016variance,reddi2016stochastic,yurtsever2019spider,mokhtari2020stochastic,negiar2020stochastic} and the references therein. 
FW has also been combined with smoothing strategies for non-smooth and composite objectives \cite{lan2012optimal,yurtsever2018conditional,locatello2019stochastic,dresdner2022faster}, and with augmented Lagrangian methods for problems with affine equality constraints \cite{gidel2018frank,yurtsever2019conditional}. 
Furthermore, various design variants of FW, such as the away-step and pairwise step strategies, can offer computational advantages. 
For a comprehensive overview of FW-type methods and their applications, we refer to \cite{kerdreux2020accelerating,bomze2021frank}.

The most closely related methods to our work are the distributed FW variants. However, the variants in 
\cite{wai2017decentralized,mokhtari2018decentralized,gao2021sample} are fundamentally different from \textsc{FedFW} as they require sharing gradient information of the clients with the server or with the neighboring nodes. In \textsc{FedFW}, clients do not share gradients, which is critical for data privacy \cite{zhu2019deep,li2022auditing}. Other distributed FW variants are proposed in \cite{wang2016parallel,zheng2018distributed,zhang2021decentralized}. However, the method proposed by \cite{zheng2018distributed} is limited to the convex low-rank matrix optimization problem, and the methods in \cite{wang2016parallel} and \cite{zhang2021decentralized} assume that the problem domain is block separable.

\section{Federated Frank-Wolfe Algorithm}
\label{section: Algorithms and Convergence Analysis}

In essence, any first-order optimization algorithm can be adapted for a simplified federated setting by transmitting local gradients to the server at each iteration. 
These local gradients can be aggregated to compute the full gradient and distributed back to the clients. 
Although it is possible to implement the standard FW algorithm in FL this way, this baseline has two major problems. 
First, it relies on communication at each iteration, which raises scalability concerns, as extending this approach to multiple local steps is not feasible. 
Secondly, sharing raw gradients raises privacy concerns, as sensitive information and data points can be inferred with high precision from transmitted gradients \cite{zhu2019deep}. 
Consequently, most FL algorithms are designed to exchange local models or step-directions rather than gradients. 
Unfortunately, a simple combination of the FW algorithm with a model aggregation step fails to find a solution to \eqref{eqn:sec:intro:problem def}, as we demonstrate with a simple counterexample in the supplementary material. 
Therefore, developing \textsc{FedFW} requires a special algorithmic approach, which we elaborate on below.

We start by rewriting problem~\eqref{eqn:sec:intro:problem def} in terms of the matrix decision variable $\mathbf{X} := [\x_1,\x_2,\ldots,\x_n]$, as follows:
\begin{equation}\label{eqn:sec:algo. and convergence anal.:composite equivalent}
    \min_{\mathbf{X} \in \mathcal{D}^n} ~ \frac{1}{n} \sum_{i=1}^n f_i(\mathbf{X} \mathbf{e}_i) + \delta_\C(\mathbf{X}).
\end{equation}
Here, $\mathbf{e}_i$ denotes the $i$th standard unit vector, and $\delta_\C$ is the indicator function for the consensus set:
\begin{equation}
\mathcal{C} := \{[\x_1,\ldots,\x_n] \in \mathbb{R}^{p\times n}: \x_1 = \x_2 = \ldots = \x_n \}.
\end{equation}
It is evident that problems \eqref{eqn:sec:intro:problem def} and \eqref{eqn:sec:algo. and convergence anal.:composite equivalent} are equivalent. 
However, the latter formulation represents the local models of the clients as the columns of the matrix $\mathbf{X}$, offering a more explicit representation for FL. 

The original FW algorithm is ill-suited for solving problem \eqref{eqn:sec:algo. and convergence anal.:composite equivalent} due to the non-smooth nature of the objective function because of the indicator function. Drawing inspiration from techniques proposed in \cite{yurtsever2018conditional}, we adopt a quadratic penalty strategy to address this challenge. The main idea is to perform FW updates on a surrogate objective which replaces the hard constraint $\delta_{\mathcal{C}}$ with a smooth function that penalizes the distance between $\mathbf{X}$ and the consensus set $\mathcal{C}$: 
\begin{equation}\label{eqn: surrogate function}
    \hat{F}_t(\mathbf{X}) = \frac{1}{n} \sum_{i=1}^n f_i(\mathbf{X} \mathbf{e}_i) + \frac{\lambda_t}{2} \dist^2(\mathbf{X},\C), 
\end{equation}
where $\lambda_t \geq 0$ is the penalty parameter. 
Note that the surrogate function is parameterized by the iteration counter $t$, as it is crucial to amplify the impact of the penalty function by gradually increasing $\lambda_t$ at a specific rate through the iterations. 
This adjustment will ensure that the generated sequence converges to a solution of the original problem in \eqref{eqn:sec:algo. and convergence anal.:composite equivalent}.

To perform an FW update with respect to the surrogate function, first, we need to compute the gradient of $\hat{F}_t$, given by
\begin{equation}\label{eqn:sec:algo. and convergence anal.:gradient of composite equivalent}
\begin{aligned}
    \nabla \hat{F}_t(\mathbf{X}) 
    & = \frac{1}{n} \sum_{i=1}^n \nabla f_i(\mathbf{X} \mathbf{e}_i)  \mathbf{e}_i^\top + \lambda_t (\mathbf{X} - \proj_\C(\mathbf{X})) \\
    & = \frac{1}{n} \sum_{i=1}^n \nabla f_i(\x_i)  \mathbf{e}_i^\top + \lambda_t \sum_{i=1}^n (\x_i - \xb)  \mathbf{e}_i^\top 
\end{aligned}
\end{equation}
where $\xb := \frac{1}{n} \sum_{i=1}^n \x_i$.
Then, we call the linear minimization oracle:
\begin{equation}\label{eqn:lmo-combined}
    \mathbf{S}^t \in \argmin{\mathbf{X} \in \dom^n} ~ \ip{\nabla \hat{F}_t(\mathbf{X}^t)}{\mathbf{X}}.
\end{equation}
Since $\mathcal{D}^n$ is separable for the columns of $\mathbf{X}$, we can evaluate \eqref{eqn:lmo-combined} in parallel for $\x_1,\x_2,\ldots,\x_n$. Define $\s_i^t$ as
\begin{equation}\label{eqn:definition of step direction}
 \s_i^t \in \argmin{\x \in \dom} ~ \ip{\frac{1}{n}\nabla f_i(\x_i^t) + \lambda_t (\x_i^t - \xb^t)}{\x},
\end{equation}
where $ \xb^t := \frac{1}{n} \sum_{i=1}^n \x_i^t$. Then, 
$\mathbf{S}^t = \sum_{i=1}^n \s_i^t \, \mathbf{e}_i^\top.$

Finally, we update the decision variable by $\mathbf{X}^{t+1} = (1-\eta_t) \mathbf{X}^t + \eta_t \mathbf{S}^t$, which can be computed column-wise in parallel:
\begin{equation}
    \x_i^{t+1} = (1-\eta_t) \x_i^t + \eta_t \mathbf{s}_i^t,
\end{equation}
where $\eta_t \in [0,1]$ is the step-size. 

This establishes the fundamental update rule for our proposed algorithm, \textsc{FedFW}. 
Note that communication is required only during the computation of $\xb^t$, which constitutes our aggregation step. 
All other computations can be performed locally by the clients. 
\Cref{algo_all} presents \textsc{FedFW} and several design variants, which are further detailed in \Cref{sec:design-variants}.

\begin{algorithm}[t]
\caption{\textsc{FedFW}: Federated Frank-Wolfe Algorithm ($+$variants)}
\label{algo_all} 
\begin{algorithmic}
\STATE \textbf{input} $\ \x_i^{1} \in \mathbb{R}^p,  ~ \forall i\in[n], ~ \lambda_t, ~\eta_t, ~\mycolorTwo{\rho_t}, ~ \xb^{1} = \frac{1}{n} \sum_{i=1}^n \x_i^1, ~ \mycolorOne{\y_i^1 = 0}, ~ \mycolorTwo{\bd_i^1 = 0}$
\FOR {round $t = 1,2,\ldots,T$}
        \vspace{0.25em}
        \STATE --- \textbf{Client}-level local training ----------------------------------- 
        \FOR {client $i = 1,2,\ldots,n$}
        \vspace{0.5em}
        \STATE --- \textbf{\textsc{FedFW}:}  \hspace{0.9cm}   $\g_i^t = \frac{1}{n} \nabla f_i (\x_i^t) + \lambda_t (\x_i^t - \xb^t)$
        \vspace{0.5em}
        \STATE --- \textbf{\textsc{FedFW$\boldsymbol{+}$}:} \hspace{0.6cm}   $\mycolorOne{\y_i^{t+1} = \y_i^t +\, \lambda_0 (\x_i^t - \xb^t)}$ \\  
        \phantom{ --- \textbf{\textsc{FedFW-sto}:}\ }  $\mycolorOne{\g_i^t = \frac{1}{n} \nabla f_i (\x_i^t) + \lambda_t (\x_i^t - \xb^t) \mycolorOne{\,+\,\y_i^{t+1}}}$
        \vspace{0.5em}
        \STATE --- \textbf{\textsc{FedFW-sto}:}  \ \ 
 $\mycolorTwo{\bd_i^{t+1} = (1- \rho_t)\bd_i^t +\, \rho_t \frac{1}{n} \nabla f_i (\x_i^t, \omega_i^t)}$ \\
 \phantom{ --- \textbf{\textsc{FedFW-sto}:}\ \ }$\mycolorTwo{\g_i^t = \bd_i^{t+1} +\, \lambda_t (\x_i^t - \xb^t) } $
         \vspace{0.25em}    
        \STATE $\s_i^t = \arg\min \{ \ip{\g_i^t}{\x} : \x \in \dom \}$
        \STATE $\x_i^{t+1} = (1-\eta_t) \x_i^t + \eta_t \s_i^t$
        \STATE Client communicates $\s_i^t$ to the server. 
        \ENDFOR
        \vspace{0.25em}
        \STATE --- \textbf{Server}-level aggregation ------------------------------------
        \STATE $\xb^{t+1} =(1-\eta_t) \xb^{t} +\eta_t \left( \frac{1}{n} \sum_{i=1}^n \s_i^t\right)$
        \vspace{0.15em}
        \STATE Server communicates $\xb^{t+1}$ to the clients.
        \vspace{0.35em}
\ENDFOR
\end{algorithmic}
\end{algorithm}

\subsection{Convergence Guarantees}
\label{section:guarantees}

This section presents the convergence guarantees of \textsc{FedFW}. 
We begin with the guarantees for problems with a smooth and convex objective function. 

\begin{theorem} \label{theorem:sec:algo. and convergence anal.: convex}
Consider problem \eqref{eqn:sec:intro:problem def} with $L$-smooth and convex loss functions $f_i$. Then, estimation $\xb^t$ generated by \textsc{FedFW} with step-size $\eta_t = \frac{2}{t+1}$ and  penalty parameter $\lambda_t = \lambda_0 \sqrt{t+1}$ for any $\lambda_0 > 0$ satisfies
\begin{align}
    F(\xb^t)-F(\xb^*) \leq \mathcal{O}(t^{-1/2}).
\end{align}
\end{theorem}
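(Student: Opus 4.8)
The plan is to run the whole analysis on the lifted surrogate $\hat{F}_t$ from \eqref{eqn: surrogate function} and only at the end translate the bound back to the original objective. First I would record two structural facts: $\hat{F}_t$ is convex (the $f_i$ are convex and $\tfrac12\dist^2(\cdot,\C)$ is convex), and it is $\Lhat$-smooth with $\Lhat=\tfrac{L}{n}+\lambda_t$, since $\tfrac1n\sum_i f_i(\X\mathbf{e}_i)$ is $\tfrac{L}{n}$-smooth in the Frobenius norm and $\tfrac{\lambda_t}{2}\dist^2(\cdot,\C)$ is $\lambda_t$-smooth. Writing $\Xstar:=[\xb^*,\dots,\xb^*]\in\C$, note $\hat{F}_t(\Xstar)=F(\xb^*)$ and $\dist(\Xstar,\C)=0$. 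Applying the descent inequality for $\Lhat$-smooth functions to the FW step $\X^{t+1}=(1-\eta_t)\Xt+\eta_t\St$, then using that $\St$ minimizes the linear model over $\dom^n$ together with convexity of $\hat{F}_t$, I obtain, for the surrogate gap $B_t:=\hat{F}_t(\Xt)-\hat{F}_t(\Xstar)$,
\[
  \hat{F}_t(\X^{t+1})-\hat{F}_t(\Xstar)\ \le\ (1-\eta_t)B_t+\tfrac{\eta_t^2\Lhat}{2}\,nD^2,
\]
where $\|\St-\Xt\|^2\le nD^2$ is the squared diameter of $\dom^n$.

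Next I would insert the homotopy step accounting for the increasing penalty. Since $\hat{F}_{t+1}(\X)-\hat{F}_t(\X)=\tfrac{\lambda_{t+1}-\lambda_t}{2}\dist^2(\X,\C)$ while $\hat{F}_{t+1}(\Xstar)=\hat{F}_t(\Xstar)$, combining with the previous display gives
\[
  B_{t+1}\ \le\ (1-\eta_t)B_t+\tfrac{\eta_t^2\Lhat}{2}nD^2+\tfrac{\lambda_{t+1}-\lambda_t}{2}\dist^2(\X^{t+1},\C).
\]

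The hard part is this last term. Bounding $\dist^2(\X^{t+1},\C)\le nD^2$ crudely makes it $\Theta(t^{-1/2})$, which would only yield $B_t=\mathcal{O}(\sqrt{t})$ and prove nothing; the consensus error must instead be tied back to the surrogate gap. To do this I would use that $\dom$ is compact, so $G:=\max_i\sup_{\x\in\dom}\|\nabla f_i(\x)\|<\infty$ and each $f_i$ is $G$-Lipschitz, and that $\xb^{t+1}\in\dom$ is feasible for \eqref{eqn:sec:intro:problem def}, whence $F(\xb^{t+1})\ge F(\xb^*)$. Since $\dist^2(\X^{t+1},\C)=\sum_i\|\x_i^{t+1}-\xb^{t+1}\|^2$, a Cauchy--Schwarz and Lipschitz estimate gives $\tfrac1n\sum_i f_i(\x_i^{t+1})\ge F(\xb^*)-\tfrac{G}{\sqrt n}\dist(\X^{t+1},\C)$, so that $\tfrac{\lambda_{t+1}}{2}\dist^2(\X^{t+1},\C)\le B_{t+1}+\tfrac{G}{\sqrt n}\dist(\X^{t+1},\C)$; a Young inequality then yields $\tfrac{\lambda_{t+1}}{4}\dist^2(\X^{t+1},\C)\le B_{t+1}+\tfrac{G^2}{n\lambda_{t+1}}$. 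Substituting, the troublesome term is at most $\tfrac{\lambda_{t+1}-\lambda_t}{\lambda_{t+1}}\big(2B_{t+1}+\tfrac{2G^2}{n\lambda_{t+1}}\big)$, and with $\lambda_t=\lambda_0\sqrt{t+1}$ the factor $\tfrac{\lambda_{t+1}-\lambda_t}{\lambda_{t+1}}=\mathcal{O}(t^{-1})$. This is the key maneuver: the additive contribution is now $\mathcal{O}(t^{-3/2})$, and the coefficient of $B_{t+1}$ can be moved to the left-hand side, perturbing the contraction factor only at order $t^{-2}$.

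Finally I would solve the recursion $B_{t+1}\le\big(1-\tfrac{2}{t+1}+\mathcal{O}(t^{-2})\big)B_t+\mathcal{O}(t^{-3/2})$ by induction to conclude $B_t=\mathcal{O}(t^{-1/2})$; the same Young estimate along the way gives $\dist^2(\Xt,\C)=\mathcal{O}(t^{-1})$. To return to the original objective, I would use $\Xbart:=\proj_\C(\Xt)=[\xbt,\dots,\xbt]$, for which $\hat{F}_t(\Xbart)=F(\xbt)$, and bound, via $\Lhat$-smoothness, $F(\xbt)-F(\xb^*)=\hat{F}_t(\Xbart)-\hat{F}_t(\Xstar)\le B_t+\tfrac{G}{\sqrt n}\dist(\Xt,\C)-\tfrac12(\lambda_t-\tfrac Ln)\dist^2(\Xt,\C)$; maximizing the last two terms over the consensus error leaves $F(\xbt)-F(\xb^*)\le B_t+\tfrac{G^2/n}{2(\lambda_t-L/n)}=\mathcal{O}(t^{-1/2})$, as claimed. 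The main obstacle throughout is the coupling in the third paragraph: without absorbing the consensus error into the surrogate gap, the increasing-penalty scheme does not close at the $\mathcal{O}(t^{-1/2})$ rate.
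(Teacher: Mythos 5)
Your proposal is correct in substance but takes a genuinely different route from the paper's proof at exactly the two places where the argument is delicate. First, to neutralize the positive term $\tfrac{\lambda_{t+1}-\lambda_t}{2}\dist^2(\X^{t+1},\C)$ injected by the growing penalty, the paper does \emph{not} use plain convexity of $\hat{F}_t$ as you do; it splits the gradient and exploits the obtuse-angle property of the projection, $\ip{\Xt-\proj_\C(\Xt)}{\X^*-\Xt}\le-\dist^2(\Xt,\C)$, which plants an explicit $-\tfrac{\eta_t\lambda_t}{2}\dist^2(\Xt,\C)$ in the recursion that dominates the penalty growth outright, so the distance terms cancel and the telescoping is clean. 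Your substitute --- self-bounding $\dist^2(\X^{t+1},\C)$ by the surrogate gap $B_{t+1}$ via Lipschitzness of the $f_i$ on the compact domain and feasibility of $\xb^{t+1}$ --- also works, but note that the resulting perturbation of the contraction factor is $\mathcal{O}(t^{-1})$, not $\mathcal{O}(t^{-2})$ as you claim: $\tfrac{2(\lambda_{t+1}-\lambda_t)}{\lambda_{t+1}}\approx\tfrac{1}{t+2}$, so the effective factor degrades from $1-\tfrac{2}{t+1}$ to roughly $1-\tfrac{1}{t}$. The induction still closes at rate $t^{-1/2}$ (any contraction exponent strictly greater than $1/2$ suffices against an $\mathcal{O}(t^{-3/2})$ forcing term), but the margin is much thinner than your bookkeeping suggests, and the same maneuver would fail for a faster-growing penalty schedule. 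Second, to pass from the surrogate bound to $F(\xbt)$, the paper invokes Lagrange duality: it lower-bounds $\hat{F}(\X^{t+1})-\hat{F}(\X^*)\ge-\norm{\Y^\star}_F\,\dist(\X^{t+1},\C)$ using a dual solution $\Y^\star$, solves a quadratic inequality to get $\dist(\X^{t+1},\C)=\mathcal{O}(t^{-1/2})$, and then uses convexity plus a gradient bound. Your device, $F(\xbt)-F(\xb^*)\le B_t+\max_{d\ge 0}\bigl(Gd/\sqrt{n}-\tfrac{1}{2}(\lambda_t-L/n)d^2\bigr)=B_t+\mathcal{O}(1/\lambda_t)$, avoids the dual variable and any appeal to strong duality, which is more elementary and self-contained; the paper's route, in exchange, yields an explicit consensus-error bound with constants expressed through $\norm{\Y^\star}_F$.
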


\begin{remark}
Our proof is inspired by the analysis in \cite{yurtsever2018conditional}. 
However, a distinction lies in how the guarantees are expressed. 
In \cite{yurtsever2018conditional}, the authors demonstrate the convergence of $\x_i^t$ towards a solution by proving that both the objective residual and the distance to the feasible set converge to zero. 
In contrast, we establish the convergence of $\xb^t$, representing a feasible point, focusing only on the objective residual. 
We present detailed proof in the supplementary material. 
\end{remark} 

It is worth noting that the convergence guarantees of \textsc{FedFW} are slower compared to those of existing unconstrained or projection-based FL algorithms. 
For instance, in the smooth convex setting with full gradients, \textsc{FedAvg} \cite{mcmahan2017communication} achieves a rate of $\mathcal{O}(t^{-1})$ in the objective residual. 
In a convex composite problem setting, \textsc{FedDA} \cite{yuan2021federated} converges at a rate of $\mathcal{O}(t^{-2/3})$. 
While \textsc{FedFW} guarantees a slower rate of $\mathcal{O}(t^{-1/2})$, it is important to highlight that \textsc{FedFW} employs cheap linear minimization oracles. 

Next, we present the convergence guarantees of \textsc{FedFW} for non-convex problems. 
For unconstrained non-convex problems, the gradient norm is commonly used as a metric to demonstrate convergence to a stationary point. 
However, this metric is not suitable for constrained problems, as the gradient may not approach zero if the solution resides on the boundary of the feasible set. 
To address this, we use the following gap function, standard in FW analysis \cite{lacoste2016convergence}:
\begin{equation}\label{eq:FW-gap}
    \gap(\x) := \max_{\mathbf{u} \in \dom} ~
     \ip{\nabla F(\x)}{ \x-\mathbf{u} }.
\end{equation}
It is straightforward to show that $\gap(\x)$ is non-negative for all $\x \in \dom$, and it attains zero if and only if $\x$ is a first-order stationary point of Problem~\eqref{eqn:sec:intro:problem def}. 

\begin{theorem}\label{theorem:sec:algo. and convergence anal.: non-convex}
Consider problem \eqref{eqn:sec:intro:problem def} with $L$-smooth loss functions $f_i$. Suppose the sequence $\xb^t$ is generated by \textsc{FedFW} with the fixed step-size $\eta_t=T^{-{2}/{3}}$, and  penalty parameter $ \lambda_t=\lambda_0 T^{{1}/{3}}$ for an arbitrary $\lambda_0 > 0$. %
Then, 
\begin{align}
    \min_{1\leq t \leq T} ~ \gap(\xb^t) \leq \mathcal{O}(T^{-1/3}).
\end{align}
\end{theorem}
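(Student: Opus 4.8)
The plan is to run a Frank--Wolfe descent analysis on the smooth surrogate $\hat{F}_t$ and then translate the resulting bound on the \emph{surrogate} gap into a bound on the \emph{true} gap $\gap(\xb^t)$ by paying for the consensus error that the penalty keeps under control. Since $\eta_t$ and $\lambda_t$ are constant over $t$ here (they depend only on the horizon $T$), I would write $\lambda := \lambda_0 T^{1/3}$, $\eta := T^{-2/3}$, and note that $\hat{F}_t \equiv \hat{F}$ is a fixed function with Frobenius smoothness constant $\hat{L} = L/n + \lambda$: the data term $\frac1n\sum_i f_i(\X\mathbf{e}_i)$ is $(L/n)$-smooth because its column-wise gradient blocks are $L$-Lipschitz, and $\tfrac{\lambda}{2}\dist^2(\cdot,\C)$ is $\lambda$-smooth since $\X\mapsto\X-\proj_\C(\X)$ is $1$-Lipschitz. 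I would also record that $\dist^2(\X,\C)=\sum_i\|\x_i-\xb\|^2$ and that $\|\nabla f_i\|\le G$ on the compact set $\dom$ for some finite $G$.

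First I would apply the descent lemma to $\hat{F}$ along the step $\X^{t+1}=(1-\eta)\X^t+\eta\St$. Using $\|\St-\X^t\|^2\le nD^2$ and that $\St$ minimizes the LMO objective (so $\ip{\nabla\hat{F}(\X^t)}{\X^t-\St}=\sgap_t$), this gives $\hat{F}(\X^{t+1})\le \hat{F}(\X^t)-\eta\,\sgap_t+\tfrac{\hat{L}\eta^2 nD^2}{2}$. Telescoping over $t=1,\dots,T$ and lower-bounding $\hat{F}(\X^{T+1})$ by $\min_{\X\in\dom^n}\frac1n\sum_i f_i(\x_i)$ (finite by compactness) yields $\frac1T\sum_{t}\sgap_t\le \frac{\Delta}{T\eta}+\tfrac{\hat{L}\eta nD^2}{2}$ for a constant $\Delta$; substituting the stated $\eta,\lambda$ makes the first term $\mathcal{O}(T^{-1/3})$ and the second $\mathcal{O}(T^{-1/3})$ as well, since $\hat{L}\eta=(L/n)T^{-2/3}+\lambda_0T^{-1/3}$.

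The core step is converting the surrogate gap into the true gap. Exploiting $\sum_i(\x_i^t-\xb^t)=0$, the aggregate penalized gradient simplifies to $\sum_i\g_i^t=\frac1n\sum_i\nabla f_i(\x_i^t)$, which differs from $\nabla F(\xb^t)=\frac1n\sum_i\nabla f_i(\xb^t)$ by a vector of norm at most $\tfrac{L}{n}\sum_i\|\x_i^t-\xb^t\|$ via $L$-smoothness. Splitting $\ip{\g_i^t}{\xb^t-\bu}=\ip{\g_i^t}{\xb^t-\x_i^t}+\ip{\g_i^t}{\x_i^t-\bu}$, bounding the second piece by the per-client gap (hence summing to $\sgap_t$) and the first piece using the boundedness of $\nabla f_i$ together with the \emph{negative} penalty contribution $-\lambda\sum_i\|\x_i^t-\xb^t\|^2$, I obtain $\gap(\xb^t)\le \sgap_t+c\sum_i\|\x_i^t-\xb^t\|$ with $c=(G+LD)/n$.

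Finally I would control the consensus error $c_t:=\dist^2(\X^t,\C)$. Since $\xb^t\in\dom$, testing the surrogate gap against the consensus direction gives $\lambda c_t\le \sgap_t+\tfrac{G}{\sqrt n}\sqrt{c_t}$, and an AM--GM step yields $c_t\le \tfrac{2\sgap_t}{\lambda}+\tfrac{G^2}{n\lambda^2}$. Averaging and using $\frac1T\sum_t\sgap_t=\mathcal{O}(T^{-1/3})$ with $\lambda=\lambda_0T^{1/3}$ gives $\frac1T\sum_t c_t=\mathcal{O}(T^{-2/3})$, so by Cauchy--Schwarz and Jensen $\frac1T\sum_t\sqrt{n\,c_t}=\mathcal{O}(T^{-1/3})$. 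Combining with the previous display, $\frac1T\sum_t\gap(\xb^t)=\mathcal{O}(T^{-1/3})$, and since the minimum is at most the average, $\min_{1\le t\le T}\gap(\xb^t)=\mathcal{O}(T^{-1/3})$. The main obstacle is precisely this gap-translation step: the surrogate gap is measured at the local models $\x_i^t$ with penalized gradients, whereas the target gap lives at the averaged feasible point $\xb^t$ with $\nabla F$, so the argument hinges on showing the penalty drives the consensus error down at a rate that makes the two notions coincide asymptotically.
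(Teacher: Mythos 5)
Your proposal is correct and follows essentially the same three-part architecture as the paper's proof: a descent-lemma/telescoping bound on the surrogate gap $\sgap$, control of $\dist(\X^t,\C)$ via the quadratic inequality $\lambda\,\dist^2(\X^t,\C)\le \sgap(\X^t)+G\,\dist(\X^t,\C)$, and a translation $\gap(\xb^t)\le\sgap(\X^t)+\mathcal{O}(\dist(\X^t,\C))$ (the paper derives this last step by restricting the max to $\dom^n\cap\C$ and invoking the projection's variational inequality, while you split the inner product per client and drop the negative penalty term --- both are valid; likewise the paper works at the argmin iterate $\tilde t$ and solves the quadratic exactly, while you average over $t$ with AM--GM and Jensen). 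One small point to make explicit: for $\Delta=\hat F(\X^1)-\min$ to be a $T$-independent constant you need the initial penalty $\tfrac{\lambda}{2}\dist^2(\X^1,\C)$ to vanish, i.e.\ $\X^1\in\C$ (all clients initialized at a common point), which the paper states explicitly and which otherwise contributes a term growing like $T^{1/3}$ and destroys the rate.
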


\begin{remark}
We present the proof in the supplementary material. Our analysis introduces a novel approach, as \cite{yurtsever2018conditional} does not explore non-convex problems. While our focus is primarily on problems \eqref{eqn:sec:intro:problem def} and \eqref{eqn:sec:algo. and convergence anal.:composite equivalent}, our methodology can be used to derive guarantees for a broader setting of minimization of a smooth non-convex function subject to affine constraints over a convex and compact set. 
\end{remark}

As with our previous results, the convergence rate in the non-convex setting is slower compared to \textsc{FedAvg}, which achieves an $\mathcal{O}(t^{-1/2})$ rate in the gradient norm (note the distinction between the gradient norm and squared gradient norm metrics). 
For composite FL problems with a non-convex smooth loss and a convex non-smooth regularizer, \textsc{FedDR} \cite{tran2021feddr} achieves an $\mathcal{O}(t^{-1/2})$ rate in the norm of a proximal gradient mapping. 
In contrast, our guarantees are in terms of the Frank-Wolfe (FW) gap. 
To our knowledge, \textsc{FedDA} does not offer guarantees in the non-convex setting.

\subsection{Privacy and Communication Benefits}
\label{subseq:Privacy and Communication Overheads}

\textsc{FedFW} offers low communication overhead since the communicated signals are the extreme points of $\dom$, which typically have low dimensional representation. 
For example, if $\dom$ is $\ell_1$ (resp., nuclear) norm-ball, then the signals $\mathbf{s}_i$ are 1-sparse (resp., rank-one).
Additionally, linear minimization is a nonlinear oracle, the reverse operator of which is highly ill-conditioned. 
Retrieving the gradient from its linear minimization output is generally unfeasible. 
For example, if $\dom$ is the $\ell_1$ norm-ball, then $\mathbf{s}_i$ merely reveals the sign of the maximum entry of the gradient. 
In the case of a box constraint, $\mathbf{s}_i$ only reveals the gradient signs. 
For the nuclear norm-ball, $\mathbf{s}_i$ unveils only the top eigenvectors of the gradient. 
Furthermore, \textsc{FW} is robust against additive and multiplicative errors in the linear minimization step \cite{jaggi2013revisiting}; consequently, we can introduce noise to augment data privacy without compromising the convergence guarantees. 

In a simple numerical experiment, we demonstrate the privacy benefits of communicating linear minimization outputs instead of gradients. 
This experiment is based on the Deep Leakage algorithm \cite{zhu2019deep} using the CIFAR100 dataset. 
Our experiment compares reconstructed images (\textit{i.e.,} leaked data points) obtained from shared gradients versus shared linear minimization outputs, under $\ell_1$ and $\ell_2$ norm constraints. 
\Cref{fig:privacy:compare} displays the final reconstructed images alongside the Peak Signal-to-Noise Ratio (PSNR) across iterations. 
It is evident that reconstruction via linear minimization oracles, particularly under the $\ell_1$ ball constraint, is significantly more challenging than raw gradients. 

\begin{figure}[!th]
    \centering
    \begin{subfigure}{0.49\textwidth}
        \includegraphics[width=\textwidth]{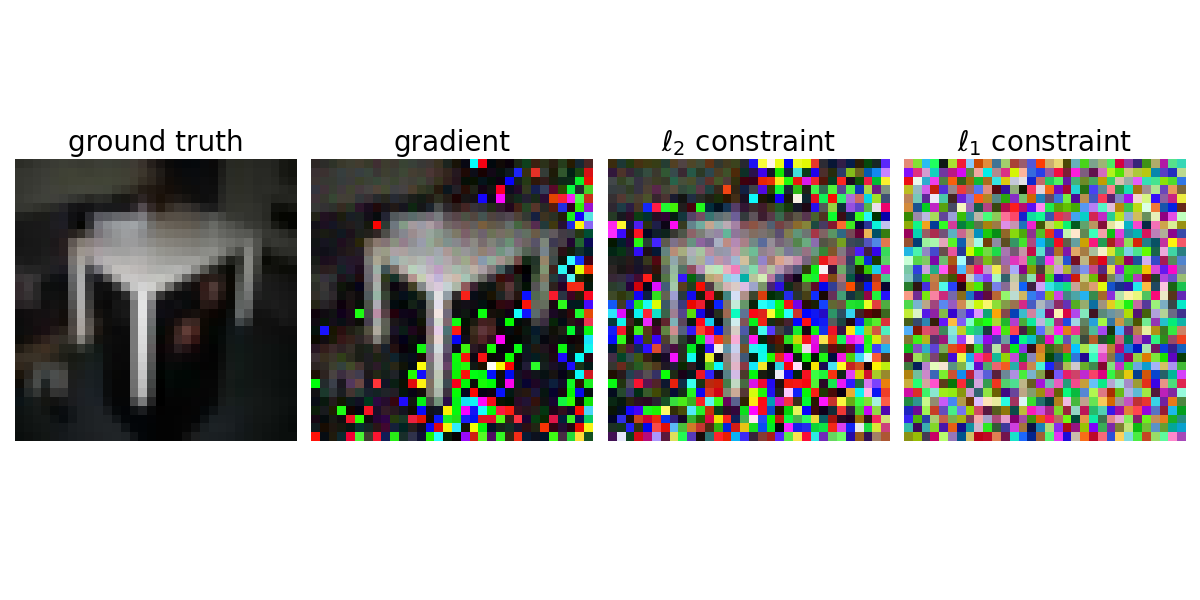}\vspace{-2em}
        \caption{}
        \label{fig:sub1}
    \end{subfigure}
    \begin{subfigure}{0.49\textwidth}
        \includegraphics[width=\textwidth]{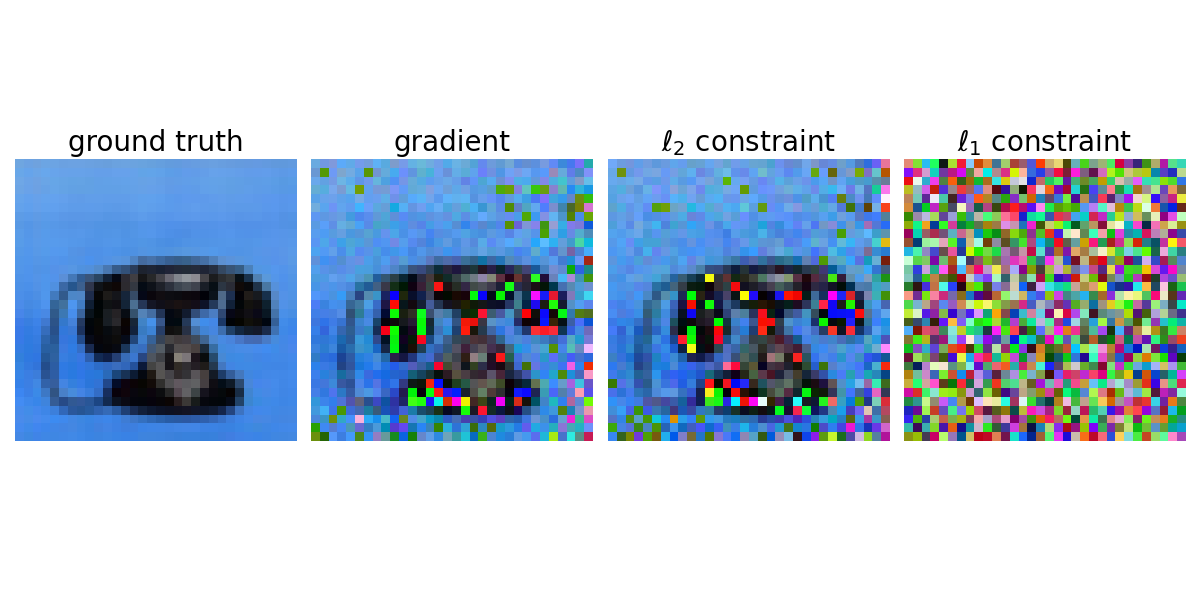}\vspace{-2em}
        \caption{}
        \label{fig:sub2}
    \end{subfigure}
    \\
    \begin{subfigure}{0.49\textwidth}
        \includegraphics[width=\textwidth]{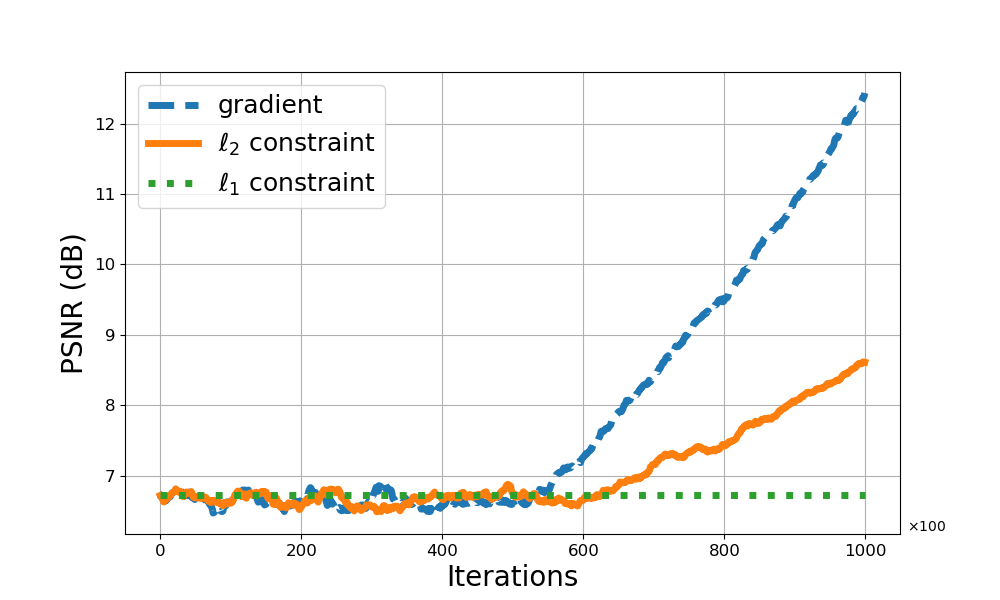}
        \caption{}
        \label{fig:sub3}
    \end{subfigure}
    \begin{subfigure}{0.49\textwidth}
        \includegraphics[width=\textwidth]{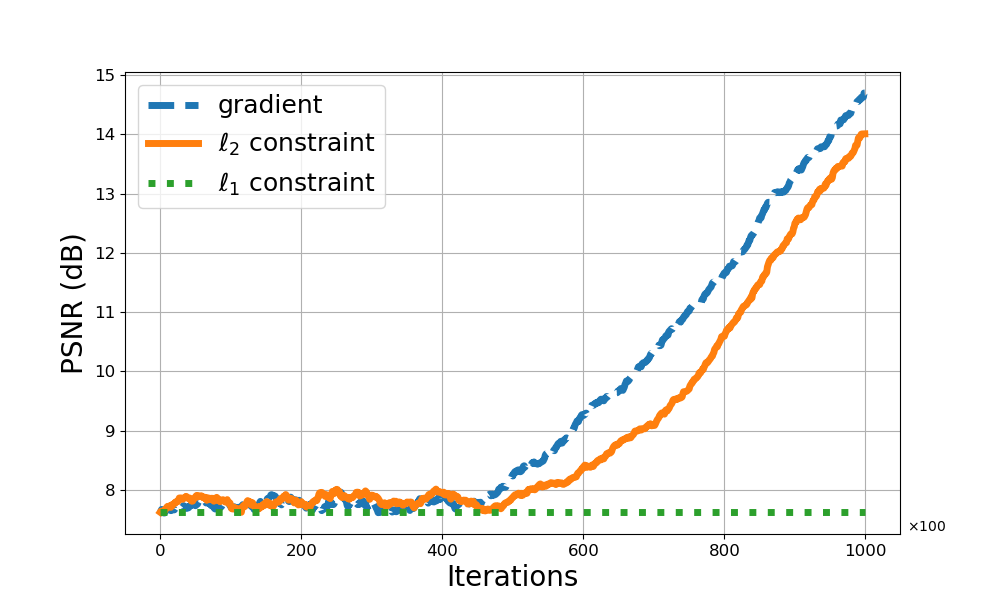}
        \caption{}
        \label{fig:sub4}
    \end{subfigure}
    \caption{Privacy benefits of sharing linear minimization outputs vs gradients. The Deep Leakage Algorithm can recover CIFAR-100 data points from shared gradients. Sharing linear minimization outputs enhances privacy. (a) and (b) compares reconstructions from gradients and LMO outputs with $\ell_2$ and $\ell_1$-norm ball constraints after $10^5$ iterations for two different data points. (c) and (d) present the reconstruction PSNR as a function of iterations for the corresponding images.}
    \label{fig:privacy:compare}
\end{figure}

\section{Design Variants of \textsc{FedFW}}
\label{sec:design-variants}
This section discusses several design variants and extensions of \textsc{FedFW}.
\subsection{\textsc{FedFW} with stochastic gradients}
Consider the following stochastic problem template:
\begin{equation}\label{eqn:sec:intro:problem def sto}
    \min_{\x \in \dom} ~ F(\x):= \avgn \mathbb{E}_{\omega_i} \big[ f_i(\x, \omega_i) \big].
\end{equation}
Here, $\omega_i$ is a random variable with an unknown distribution $\mathcal{P}_i$. The client loss function $f_i(\x) := \mathbb{E}_{\omega_i} \big[ f_i(\x, \omega_i) \big]$ is defined as the expectation over this unknown distribution; hence we cannot compute its gradient. We design \textsc{FedFW-sto} for solving this problem. 

We assume that at each iteration, every participating client can independently draw a sample $\omega_i^t$ from their distribution $\mathcal{P}_i$. 
$\nabla f_i(\x,\omega_i^t)$ serves as an unbiased estimator of $\nabla f_i(\x)$. 
Additionally, we adopt the standard assumption that the estimator has bounded variance. 
\begin{assumption}[Bounded variance]
\label{bounded-grad-noise}
Let $\nabla f_i(\x, \omega_i)$ denote the stochastic gradients. We assume that it satisfies the following condition for some $\sigma < \infty$:
\begin{equation}
    \mathbb{E}_{\omega_i} \Big[ \bnsq{ \nabla f_i(\x, \omega_i) - \nabla f_i(\x) } \Big] \leq \sigma^2.
\end{equation}
\end{assumption}

Unfortunately, FW does not readily extend to stochastic settings by replacing the gradient with an unbiased estimator of bounded variance. 
Instead, adapting FW for stochastic settings, in general, requires a variance reduction strategy. 
Inspired by \cite{locatello2019stochastic,mokhtari2020stochastic}, we employ the following averaged gradient estimator to tackle this challenge. 
We start by $\bd_i^0 = \boldsymbol{0}$, and iteratively update 
\begin{equation}
\bd_i^{t+1} = (1- \rho_t)\bd_i^t +\, \rho_t \frac{1}{n} \nabla f_i (\x_i^t, \omega_i^t), 
\end{equation}
for some $\rho_t \in (0,1]$. 
\textsc{FedFW-sto} uses $\bd_i^{t+1}$ in place of the gradient in the linear minimization step; pseudocode is shown in \Cref{algo_all}. 
Although $\bd_i^{t+1}$ is not an unbiased estimator, it offers the advantage of reduced variance. 
The balance between bias and variance can be adjusted by modifying $\rho_t$, and the analysis relies on finding the right balance, reducing variance sufficiently while maintaining the bias within tolerable limits. 

\begin{theorem} \label{theorem:sto convex}
Consider problem \eqref{eqn:sec:intro:problem def sto} with $L$-smooth and convex loss functions $f_i$. 
Suppose \Cref{bounded-grad-noise} holds. 
Then, the sequence $\xb^t$ generated by \textsc{FedFW-sto} in \Cref{algo_all}, with step-size $\eta_t = \frac{9}{t+8}$, penalty parameter $\lambda_t = \lambda_0 \sqrt{t+8}$ for an arbitrary $\lambda_0 > 0$, and $\rho_t =\frac{4}{(t+7)^{2/3}}$ satisfies
\begin{equation}
\mathbb{E} [ F(\xb^t) ] - F(\x^*) 
  \leq \mathcal{O}(t^{-1/3}).
\end{equation}
\end{theorem}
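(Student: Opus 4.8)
The plan is to carry out the standard Frank--Wolfe descent on the smooth convex surrogate $\hat{F}_t$ from \eqref{eqn: surrogate function}, while carefully tracking two sources of slack that are absent in the deterministic proof of \Cref{theorem:sec:algo. and convergence anal.: convex}: the error of the averaged gradient estimator $\bd_i^{t+1}$ and its effect on the linear minimization step in \Cref{algo_all}. Throughout, let $\Xstar := \x^*\mathbf{1}^\top$ denote the consensus embedding of a minimizer, so that $\hat{F}_t(\Xstar)=F(\x^*)$ because the penalty vanishes on $\C$; let $G:=\max_i\sup_{\x\in\dom}\norm{\nabla f_i(\x)}<\infty$ (finite since $\dom$ is compact and each $\nabla f_i$ is continuous); and write $h_t:=\hat{F}_t(\Xt)-F(\x^*)$. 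Note that $\hat{F}_t$ is convex with Frobenius smoothness constant $L_t:=\tfrac{L}{n}+\lambda_t$, that $\dist^2(\Xt,\C)=\sum_i\nsq{\x_i^t-\xbt}$, and that $\dom^n$ has squared diameter $nD^2$.

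The first and most delicate ingredient is a variance bound for the estimator. Setting $\epsilon_i^t:=\bd_i^{t+1}-\tfrac{1}{n}\nabla f_i(\x_i^t)$, the update for $\bd_i^{t+1}$ gives
\begin{equation}
\epsilon_i^t = (1-\rho_t)\epsilon_i^{t-1} + (1-\rho_t)\tfrac{1}{n}\big(\nabla f_i(\x_i^{t-1})-\nabla f_i(\x_i^t)\big) + \rho_t\tfrac{1}{n}\big(\nabla f_i(\x_i^t,\omega_i^t)-\nabla f_i(\x_i^t)\big).
\end{equation}
Conditioning on the past, the last term is mean zero with second moment at most $\rho_t^2\sigma^2/n^2$ by \Cref{bounded-grad-noise}, while $L$-smoothness together with the update bound $\norm{\x_i^{t-1}-\x_i^t}\le\eta_{t-1}D$ controls the drift term. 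Applying Young's inequality with a parameter of order $\rho_t$ yields a scalar recursion $\mathbb{E}\nsq{\epsilon_i^t}\le(1-\tfrac{\rho_t}{2})\mathbb{E}\nsq{\epsilon_i^{t-1}}+\mathcal{O}(\rho_t^2)+\mathcal{O}(\eta_{t-1}^2/\rho_t)$. With $\rho_t=\tfrac{4}{(t+7)^{2/3}}$ and $\eta_t=\tfrac{9}{t+8}$, both driving terms are $\mathcal{O}(t^{-4/3})$ against a contraction of $1-\Theta(t^{-2/3})$, so a routine induction gives $\mathbb{E}\nsq{\epsilon_i^t}=\mathcal{O}(t^{-2/3})$; summing over clients and applying Jensen's inequality, the aggregate error $\mathcal{E}_t$ (columns $\epsilon_i^t$) satisfies $\mathbb{E}\,\norm{\mathcal{E}_t}_F=\mathcal{O}(\sqrt{n}\,t^{-1/3})$.

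Second, I would run the FW descent on $\hat{F}_t$. Since $\s_i^t$ minimizes the linear model built from the estimated gradient, LMO optimality against $\Xstar$ combined with convexity of $\hat{F}_t$ produces
\begin{equation}
\hat{F}_t(\mathbf{X}^{t+1}) - F(\x^*) \le (1-\eta_t)h_t + \eta_t\sqrt{n}\,D\,\norm{\mathcal{E}_t}_F + \tfrac{\eta_t^2 L_t nD^2}{2},
\end{equation}
where the middle term is the penalty for using $\mathcal{E}_t$ instead of the exact gradient. Accounting for the surrogate drift $\hat{F}_{t+1}(\mathbf{X}^{t+1})-\hat{F}_t(\mathbf{X}^{t+1})=\tfrac{\lambda_{t+1}-\lambda_t}{2}\dist^2(\mathbf{X}^{t+1},\C)$ turns this into a recursion for $h_{t+1}$. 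To keep this drift from destroying the rate, I would bound $\dist^2$ through the penalty itself: convexity of the $f_i$ with Cauchy--Schwarz gives $\tfrac{\lambda_t}{2}\dist^2(\Xt,\C)\le h_t+\tfrac{G}{\sqrt{n}}\dist(\Xt,\C)$, hence $\dist^2(\Xt,\C)\le \tfrac{4}{\lambda_t}(h_t)_+ + \tfrac{4G^2}{n\lambda_t^2}$. As $\tfrac{\lambda_{t+1}-\lambda_t}{\lambda_{t+1}}=\mathcal{O}(t^{-1})$ for $\lambda_t=\lambda_0\sqrt{t+8}$, the drift contributes $\mathcal{O}(h_{t+1}/t)+\mathcal{O}(t^{-2})$, which is absorbed into the left-hand side at the cost of an $\mathcal{O}(1/t)$ perturbation of the contraction factor. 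Taking expectations leaves
\begin{equation}
\mathbb{E}[h_{t+1}] \le \Big(1-\tfrac{c}{t}\Big)\mathbb{E}[h_t] + \mathcal{O}(t^{-4/3}),
\end{equation}
with a constant $c>\tfrac{1}{3}$, whose solution is $\mathbb{E}[h_t]=\mathcal{O}(t^{-1/3})$. Finally, applying $L$-smoothness of each $f_i$ at $\x_i^t$ evaluated at $\xbt$ and completing the square gives $F(\xbt)-F(\x^*)\le h_t+\tfrac{G^2}{2n(\lambda_t-L/n)}=h_t+\mathcal{O}(t^{-1/2})$, so $\mathbb{E}[F(\xbt)]-F(\x^*)=\mathcal{O}(t^{-1/3})$.

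The hard part will be the coupling in the third step: the penalty-drift term couples $h_{t+1}$ on both sides of the recursion and simultaneously requires an a priori decay of $\dist^2(\Xt,\C)$, which itself depends on the bound $h_t=\mathcal{O}(t^{-1/3})$ one is trying to establish. Resolving this calls for a self-consistent substitution of the $\dist^2\le\tfrac{4}{\lambda_t}(h_t)_+ + \cdots$ estimate directly into the recursion, together with meticulous constant bookkeeping: the choices $\eta_t=\tfrac{9}{t+8}$, $\lambda_t=\lambda_0\sqrt{t+8}$, and $\rho_t=\tfrac{4}{(t+7)^{2/3}}$ are tuned so that, after the drift perturbation, the effective contraction $1-c/t$ keeps $c$ comfortably above $\tfrac{1}{3}$ while the driving term stays at $\mathcal{O}(t^{-4/3})$, letting the induction close at the $t^{-1/3}$ rate. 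The estimator lemma is delicate for the analogous reason: the exponents in $\rho_t^2$ and in $\eta_{t-1}^2/\rho_t$ must both land on $t^{-4/3}$, which is precisely what forces the $2/3$ power in $\rho_t$.
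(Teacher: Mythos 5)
Your proposal is correct in its essentials, but it takes a genuinely different route from the paper. The paper's proof of \Cref{theorem:sto convex} is essentially ``cite and convert'': it observes that \textsc{FedFW-sto} is an instance of the Stochastic Homotopy CGM of Locatello et al.\ applied to the lifted problem \eqref{eqn:sec:algo. and convergence anal.:composite equivalent}, imports Theorem~2 of that paper wholesale as \Cref{sec:appendix:sto-bounds-lem} to obtain $\mathbb{E}[\hat F(\Xt)]-\hat F(\Xstar)=\mathcal{O}(t^{-1/3})$ together with a decay rate for $\mathbb{E}[\dist^2(\Xt,\C)]$, and then passes to the feasible point $\xb^t$ via convexity, Cauchy--Schwarz, the gradient bound of \Cref{lem:background:def-G}, and the Lagrange dual variable $\Y^\star$ (which requires strong duality / Slater's condition). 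You instead re-derive the imported bounds from scratch: your variance-reduction recursion for $\epsilon_i^t$ is the standard Mokhtari-type lemma with the exponents correctly matched ($\rho_t^2$ and $\eta_{t-1}^2/\rho_t$ both landing on $t^{-4/3}$ against a $1-\Theta(t^{-2/3})$ contraction), and your homotopy recursion for $h_t$ reproduces the SHCGM descent argument, though you absorb the penalty drift by self-bounding $\dist^2$ through $h_t$ rather than cancelling it against the $-\tfrac{\eta_t\lambda_t}{2}\dist^2(\Xt,\C)$ term produced by the LMO step, as the paper's deterministic proof does --- your route works (the circular $h_{t+1}$ dependence enters with an $\mathcal{O}(1/t)$ coefficient and is absorbed, since the effective contraction constant stays well above $1/3$) but is more roundabout than necessary. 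Your final conversion step is actually a simplification over the paper's: by applying $L$-smoothness of each $f_i$ at $\x_i^t$ and completing the square against the penalty, you bound $F(\xb^t)-F(\x^*)\le h_t+\mathcal{O}(1/\lambda_t)$ using only the gradient bound $G$, avoiding the dual multiplier $\Y^\star$ and the attendant strong-duality hypothesis entirely. In short, what the paper buys is brevity by leaning on a published theorem; what your approach buys is a self-contained argument and a dual-free conversion to the feasible iterate, at the cost of having to carry out the full variance-reduction and drift bookkeeping yourself.
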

\begin{remark}
Our analysis in this setting is inspired by \cite{locatello2019stochastic}; however, we establish the convergence of the feasible point $\xb^t$. 
This differs from the guarantees in \cite{locatello2019stochastic}, which demonstrate the convergence of $\x_i^t$ towards a solution by proving that both the expected objective residual and the expected distance to the feasible set converge to zero. 
We present the detailed proof in the supplementary material.
\end{remark}

In the smooth convex stochastic setting, \textsc{FedAvg} achieves a convergence rate of $\mathcal{O}(t^{-1/2})$. 
This rate also applies to \textsc{FedDA} when addressing composite convex problems. 
Additionally, under the assumption of strong convexity, \textsc{Fast-FedDA} \cite{bao2022fast} achieves an accelerated rate of $\mathcal{O}(t^{-1})$. 
In comparison, \textsc{FedFW-sto} converges with $\mathcal{O}(t^{-1/3})$ rate; however, it benefits from the use of inexpensive linear minimization oracles.

\subsection{\textsc{FedFW} with partial client participation} \label{FedFW_with_pp}

A key challenge in FL is to tackle random device participation schedules. 
Unlike a classical distributed optimization scheme, in most FL applications, clients have some autonomy and are not entirely controlled by the server. 
Due to various factors, such as network congestion or resource constraints, clients may intermittently participate in the training process. 
This obstacle can be tackled in \textsc{FedFW} by employing a block-coordinate Frank-Wolfe approach \cite{lacoste2013block}. 
Given that the domain of problem \eqref{eqn:sec:algo. and convergence anal.:composite equivalent} is block-separable, we can extend our \textsc{FedFW} analysis to block-coordinate updates. 

Suppose that in every round $t$, the client $i$ participates in the training procedure with a fixed probability of $\texttt{p}_i \in (0,1]$. 
For simplicity, we assume the participation rate is the same among all clients, \textit{i.e.}, $\texttt{p}_1 = \ldots = \texttt{p}_n := \texttt{p}$, but non-uniform participation can be addressed similarly. 
Instate the convex optimization problem described in \Cref{theorem:sec:algo. and convergence anal.: convex} but with the random client participation scheme. %
At round $t$, the training procedure follows the same as in \Cref{algo_all} for the participating clients, and $\x_{i}^{t+1} = \x_{i}^t$ for the non-participants. 
Then, the estimation $\xb^t$ generated with the step-size $\smash \eta_t = \frac{2}{\texttt{p}(t-1)+2}$ and penalty parameter $ \scriptstyle \lambda_t = \lambda_0 \sqrt{ \texttt{p}(t-1)+2}$ converges to a solution with rate
\begin{align}
    \mathbb{E}[F(\xb^t) - F(\x^*)] \leq \mathcal{O}\big((\texttt{p}\,t)^{-1/2}\big).
\end{align}

Similarly, if we consider the non-convex setting of \Cref{theorem:sec:algo. and convergence anal.: non-convex} with randomized client participation, and use the block-coordinate \textsc{FedFW} with step-size $\eta_t={(\texttt{p}T+1)}^{-\frac{2}{3}}$, and  penalty parameter $\lambda_t=\lambda_0 {(\texttt{p}T+1)}^{\frac{1}{3}}$, we get
\begin{align}
\min_{1\leq t \leq T} ~  \mathbb{E}  [ \gap(\xb^t) ] \leq \mathcal{O}\big((\texttt{p}\,T)^{-1/3}\big).
\end{align}
The proofs are provided in the supplementary material.

\subsection{\textsc{FedFW} with split constraints for stragglers} \label{sec:stragglers}

FL systems are frequently implemented across heterogeneous pools of client hardware, leading to the `straggler effect'-- delays in execution resulting from clients with less computation or communication speeds. 
In \textsc{FedFW}, we can mitigate this issue by assigning tasks to straggling clients more compatible with their computational capabilities. 
Theoretically, this adjustment can be achieved through certain special reformulations of the problem defined in \eqref{eqn:sec:algo. and convergence anal.:composite equivalent}. 
Specifically, the constraint $\X \in \dom^n$ can be refined to $\X \in \bigcap_{i=1}^n \dom_i$, where $\bigcap_{i=1}^n \dom_i = \dom$. 
This modification does not affect the solution set, due to the consensus constraint.

In general, in \textsc{FedFW}, most of the computation occurs during the linear minimization step. 
Suppose that the resources of the client $i$ are limited, particularly for arithmetic computations. 
In this case, we can select $\dom_i$ as a superset of $\dom$ where linear minimization computations are more straightforward. 
For instance, a Euclidean (or Frobenius) norm-ball encompassing $\dom$ could be an excellent choice. 
Then, $\s_i^t$ becomes proportional to the negative of $\g_i^t$ with appropriate normalization based on the radius of $\dom_i$, facilitating computation with minimal effort. 
On the other hand, if the primary bottleneck is communication, we might opt for $\dom_i$ characterized by sparse extreme points, such as an $\ell_1$-norm ball containing $\dom$ or by low-rank extreme points like those in a nuclear-norm ball. 
This strategy results in sparse (or low-rank) $\s_i^t$, thereby streamlining communication.

\subsection{\textsc{FedFW} with augmented Lagrangian}

\textsc{FedFW} employs a quadratic penalty strategy to handle the consensus constraint. 
We also propose an alternative variant, \textsc{FedFW+}, which is modeled after the augmented Lagrangian strategy in \cite{yurtsever2019conditional}. 
The pseudocode for \textsc{FedFW+} is presented in \Cref{algo_all}.
We compare the empirical performance of \textsc{FedFW} and \textsc{FedFW+} in \Cref{section: Numerical Experiments}.  
The theoretical analysis of \textsc{FedFW+} is omitted here; for further details, we refer readers to \cite{yurtsever2019conditional}.

\section{Numerical Experiments}
\label{section: Numerical Experiments}

In this section, we evaluate and compare the empirical performance of our methods against \textsc{FedDR}, which serves as the baseline algorithm, on the convex multiclass logistic regression (MCLR) problem and the non-convex tasks of training convolutional neural networks (CNN) and deep neural networks (DNN).
For each problem, we consider two different choices for the domain $\dom$: namely the $\smash{\ell_1}$ and $\smash{\ell_2}$ ball constraints, each with a radius of $10$. 
We assess the models' performance based on validation accuracy, validation loss, and the Frank-Wolfe gap \eqref{eq:FW-gap}. 
To evaluate the effect of data heterogeneity, we conducted experiments using both IID and non-IID data distributions across clients. 
The code for the numerical experiments can be accessed via \url{https://github.com/sourasb05/Federated-Frank-Wolfe.git}. 

\paragraph{Datasets.} 
We use several datasets in our experiments: 
MNIST \cite{lecun1998gradient}, CIFAR-10 \cite{krizhevsky2009learning}, EMNIST \cite{cohen2017emnist}, and a synthetic dataset generated as described in \cite{t2020personalized}. 
Specifically, the synthetic data is drawn from a multivariate normal distribution, and the labels are computed using softmax functions. 
We create data points of $60$ features and from $10$ different labels. 
For all datasets, we consider both IID and non-IID data distributions across the clients. 
In the non-IID scenario, each user has data from only 3 labels. 
We followed this rule for the synthetic data, as well as MNIST, CIFAR$10$, and EMNIST-$10$. 
For EMNIST-$62$, each user has data from 20 classes, with unequal distribution among users.

\subsection{Comparison of algorithms in the convex setting}

We tested the performance of the algorithms on the strongly convex MCLR problem using the MNIST and CIFAR-10 datasets as well as the synthetic dataset. 
\Cref{table:strongly-convex} presents the test accuracy results for the algorithms with IID and non-IID data distributions, and for two different choices of $\dom$. 
In these experiments, we simulated FL with 10 clients, all participating fully ($\texttt{p}=1$). 
We ran the algorithms for $100$ communication rounds, with one local iteration per round.

\subsection{Comparison of algorithms in the non-convex setting}

For the experiments in the non-convex setting we trained CNNs using the MNIST  dataset and a DNN with two hidden layers using the synthetic dataset. 
We considered an FL system with $10$ clients and full participation $(\texttt{p}=1)$. 
Similar to the previous case, we evaluated IID and non-IID data distributions as well as different choices of
$\dom$, and ran the methods for 100 communication rounds with a single local training step. 
\Cref{table:non-convex}  summarizes the resulting test accuracies.

\subsection{Comparison of algorithms in the stochastic setting}

Finally, we compared the performance of \textsc{FedFW-sto} against \textsc{FedDR} in the stochastic setting, where only stochastic gradients are accessible. 
For this experiment, we consider an FL network with  $100$ clients with full participation ($\texttt{p}=1$). 
Over this network, we trained the MCLR model using EMNIST-$10$, EMNIST-$62$, CIFAR$10$, and the synthetic dataset. 
We used a mini-batch size of $64$, one local iteration per communication round, and ran the algorithms for $300$ communication rounds. 
\Cref{table:stochastic} summarizes the test accuracies obtained in this experiment. 
\textsc{FedFW-sto} outperformed \textsc{FedDR} in our experiments in the stochastic setting. 

\begin{table*}[t!]
\centering
\caption{Comparison of algorithms on the convex MCLR problem with different datasets and choices of $\dom$. We consider both IID and non-IID data distributions. The numbers represent test accuracy.} \label{table:strongly-convex}
\scriptsize
\begin{adjustbox}{width=\textwidth}
\begin{tabular}{| c | c | c | c | c | c | c |}
\specialrule{0.2em}{0.4pt}{0.4pt}
\multicolumn{1}{|c|}{} & \multicolumn{3}{c|}{IID} & \multicolumn{3}{c|}{non-IID} \\ \cline{2-7}
\multicolumn{1}{|c|}{} & MNIST & Synthetic & CIFAR10 & MNIST & Synthetic & CIFAR10 \\ \cline{2-7}
& \multicolumn{6}{|c|}{$\ell_2$ constraint} \\ \hline 
\textsc{FedDR} & $\mathbf{89.59(\pm 0.0003)}$ & $78.24(\pm 0.007)$ & $\mathbf{39.95(\pm 0.001)}$ & $83.72(\pm 0.001)$ & $92.97(\pm 0.005)$ & $37.79(\pm 0.004)$ \\ \hline
\textsc{FedFW} & $86.96(\pm 0.009)$ & $\mathbf{80.20(\pm 0.01)}$ & $36.30(\pm 0.001)$ & $86.95(\pm 0.001)$ & $\mathbf{94.81(\pm 0.001)}$ & $\mathbf{38.13(\pm 0.003)}$ \\ \hline
\textsc{FedFW+} & $86.50(\pm 0.001)$ & $79.96(\pm 0.001)$ & $36.30(\pm 0.001)$ & $\mathbf{86.98(\pm 0.001)}$ & $94.56(\pm 0.009)$ & $37.20(\pm 0.004)$ \\  \specialrule{0.2em}{0.4pt}{0.4pt}
& \multicolumn{6}{|c|}{$\ell_1$ constraint} \\ \cline{1-7}
\textsc{FedDR} & $72.18(\pm 0.0004)$ & $79.00(\pm 0.004)$ & $\mathbf{23.25(\pm 0.00)}$ & $74.29(\pm 0.0)$ & $\mathbf{93.81(\pm 0.009)}$ & $24.77(\pm 0.0)$ \\ \hline
\textsc{FedFW} & $\mathbf{78.07(\pm 0.005)}$ & $81.63(\pm 0.01)$ & $21.86(\pm 0.003)$ & $\mathbf{80.54(\pm 0.0)}$ & $90.84(\pm 0.003)$ & $25.08(\pm 0.004)$ \\ \hline
\textsc{FedFW+} & $69.17(\pm 0.004)$ & $\mathbf{81.92(\pm 0.008)}$ & $21.99(0.006)$ & $71.32(\pm 0.002)$ & $91.20(\pm 0.006)$ & $\mathbf{25.16(\pm 0.008)}$ \\  
\specialrule{0.2em}{0.4pt}{0.4pt}
\end{tabular}
\end{adjustbox}
\end{table*}

\begin{table*}[t!]
\centering
\caption{Comparison of algorithms on the non-convex tasks. We train a CNN using MNIST, and a DNN with synthetic data. We consider IID and non-IID data distributions, and different choices of $\dom$. The numbers show test accuracy.} \label{table:non-convex}
\small
\begin{tabular}{| c | c | c | c | c |}
\specialrule{0.2em}{0.4pt}{0.4pt}
   & \multicolumn{2}{| c |}{IID} & \multicolumn{2}{| c |}{non-IID} \\
    \cline{2-5}
    & MNIST & Synthetic  & MNIST & Synthetic   \\
    \cline{2-5}
    & \multicolumn{4}{| c |}{{$\ell_2$ constraint}} \\

    \hline
    \textsc{FedDR} & $\mathbf{96.89(\pm 0.0009)}$ & $75.96(\pm 0.03)$ & $88.93(\pm 0.013)$ &  $93.85(\pm 0.009)$ \\ \hline
     \textsc{FedFW} & $95.87(\pm 0.01)$  & $81.70(\pm 0.008)$ & $\mathbf{92.70(\pm 0.002)}$ & $\mathbf{ 96.13(\pm 0.007)}$  \\ \hline
     \textsc{FedFW+} & $95.05(\pm 0.005)$ & $\mathbf{ 81.96(\pm 0.006)}$  &  $91.79(\pm 0.005)$ & $96.08(\pm 0.004)$ \\      
     \specialrule{0.2em}{0.4pt}{0.4pt}
     & \multicolumn{4}{| c |}{{$\ell_1$ constraint}} \\
     \cline{2-5} 
    \textsc{FedDR} & $11.72(\pm 0.0)$ & $\mathbf{78.59(\pm 0.006)}$ & $16.75(\pm 0.01)$ &  $\mathbf{ 93.51(\pm 0.006)}$  \\ \hline
    \textsc{FedFW} & $\mathbf{23.88(\pm 0.005)}$ & $75.52(\pm 0.008)$ & $\mathbf{37.62(\pm 0.008)}$ & $91.53(\pm 0.01)$  \\ \hline
     \textsc{FedFW+} & $20.40(\pm 0.003)$ & $76.44(\pm 0.004)$ & $36.27(\pm 0.006)$ & $91.96(\pm 0.003)$ \\
\specialrule{0.2em}{0.4pt}{0.4pt}
\end{tabular}
\end{table*}

\begin{table*}[h!]
\centering
\caption{Comparison of algorithms in the stochastic setting on the convex MCLR problem with different datasets and $\ell_2$ ball constraint. We consider both IID and non-IID data distributions. The numbers represent test accuracy.} \label{table:stochastic}
\small
\begin{tabular}{| c | c | c | c | c |}
\specialrule{0.2em}{0.4pt}{0.4pt}
\multicolumn{1}{|c|}{}  & \multicolumn{2}{c|}{IID} & \multicolumn{2}{c|}{non-IID} \\ \cline{2-5}
 & EMNIST-10 & EMNIST-62 & EMNIST-10 & EMNIST-62
\\ \hline
\textsc{FedDR} & $92.18(\pm 0.01)$ & $39.58(\pm 0.00)$ & $85.70(\pm 0.002)$ & $41.09(\pm 0.002)$ \\ \hline
\textsc{FedFW-sto} & $\mathbf{93.79(\pm 0.00)}$ & $\mathbf{41.22(\pm 0.00)}$ & $\mathbf{91.88(\pm 0.01)}$ & $\mathbf{60.51(\pm 0.002)}$ \\
\specialrule{0.2em}{0.4pt}{0.4pt} 
\multicolumn{1}{|c|}{} & \multicolumn{1}{|c|}{Synthetic}  & 
\multicolumn{1}{|c|}{CIFAR10}  &
\multicolumn{1}{|c|}{Synthetic}  & 
\multicolumn{1}{|c|}{CIFAR10} 
\\ \hline 
\textsc{FedDR} & \multicolumn{1}{|c|}{$67.68 (\pm 0.003)$} & $36.39(\pm 0.004)$ & \multicolumn{1}{|c|}{$84.70 (\pm 0.01)$} & $34.91(\pm 0.008)$ \\ \hline
\textsc{FedFW-sto} & \multicolumn{1}{|c|}{$\mathbf{72.01 (\pm 0.004)}$} & $\mathbf{38.52(\pm 0.01) }$& \multicolumn{1}{|c|}{$\mathbf{87.32 (\pm 0.003)}$} & $\mathbf{37.83(\pm 0.01)}$
\\  
\specialrule{0.2em}{0.4pt}{0.4pt}
\end{tabular}
\end{table*}

\newcommand{\FigWidthOne}{0.33}
\newcommand{\FigWidthTwo}{0.25}
\newcommand{\FigWidthThree}{0.33}
\newcommand{\FigWidthFour}{0.5}

\begin{figure}[!htb]
\centering

\begin{subfigure}[b]{0.49\textwidth}
  \centering
  \includegraphics[width=\textwidth]{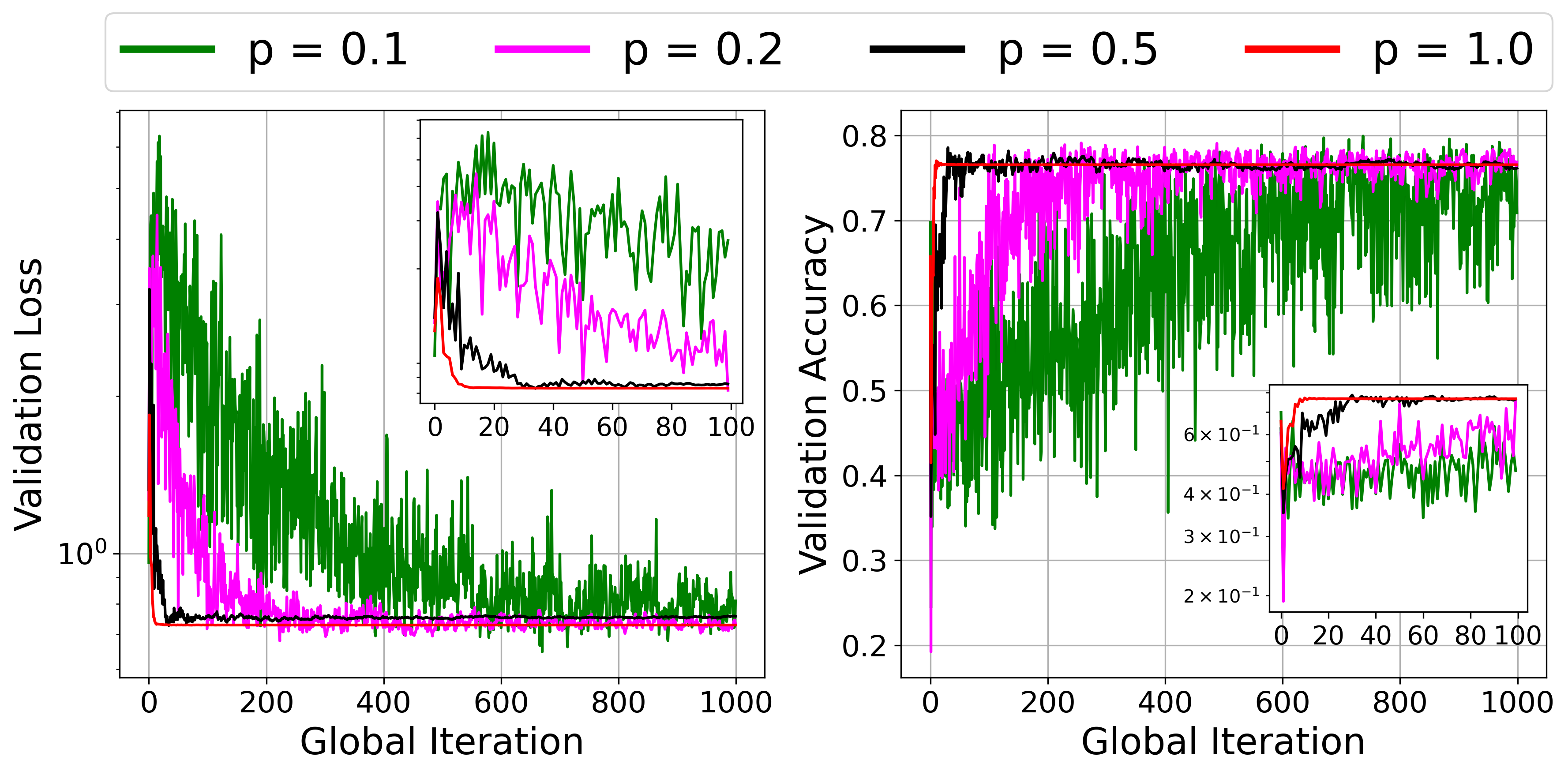}
  \caption{$\lambda_0 = 0.01$}
\end{subfigure}
\hfill %
\begin{subfigure}[b]{0.49\textwidth}
  \centering
  \includegraphics[width=\textwidth]{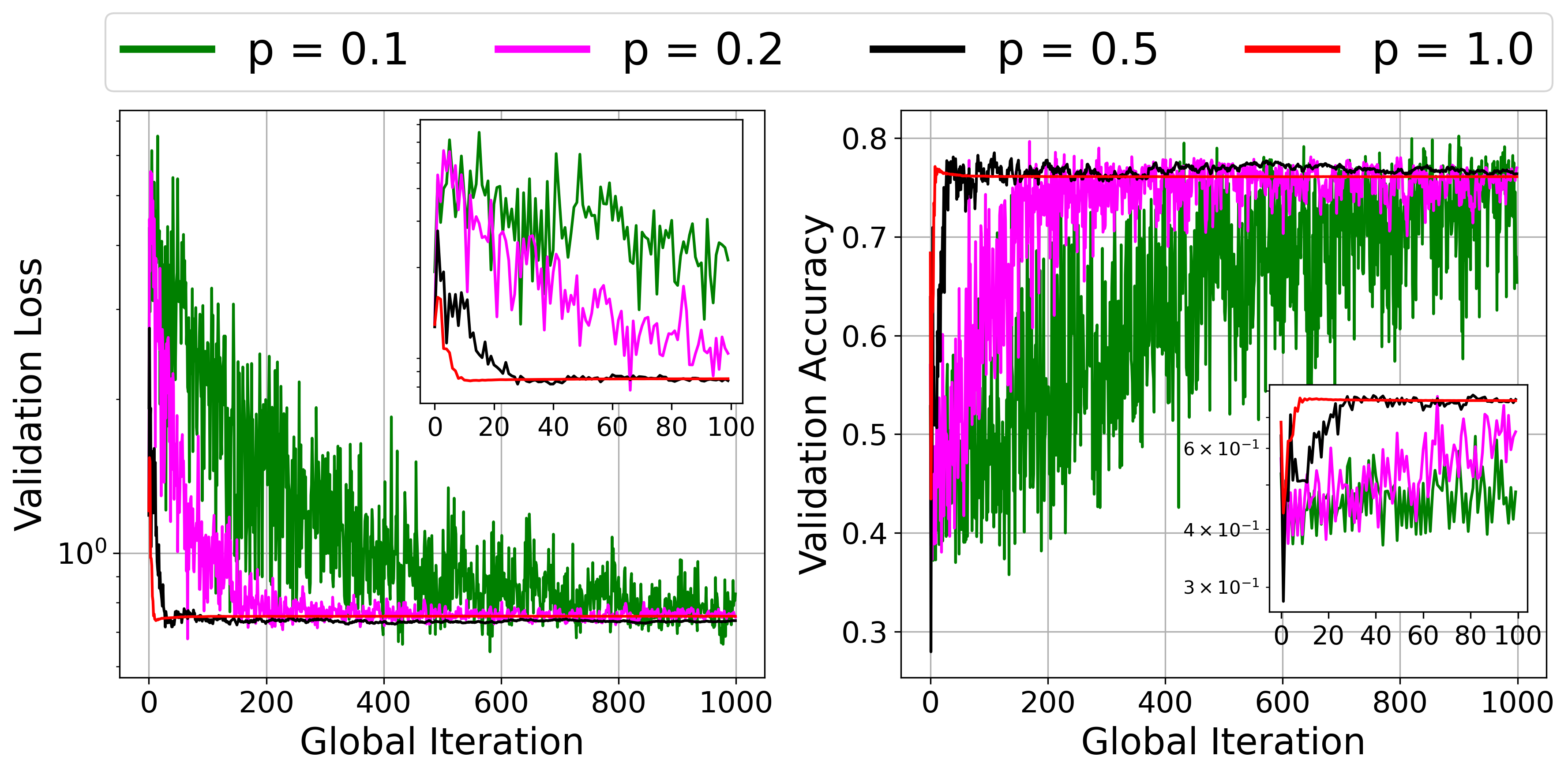}
  \caption{$\lambda_0 = 0.001$}
\end{subfigure}

\caption{Effect of participation $\texttt{p}$ on \textsc{FedFW}. The experiment was conducted with MCLR using synthetic data, an $\ell_1$ constraint, and two different choices of $\lambda_0$.}
\label{fig: partial-participation_MCLR}

\vspace{2em}

\begin{subfigure}[b]{0.49\textwidth}
  \centering
  \includegraphics[width=\linewidth]{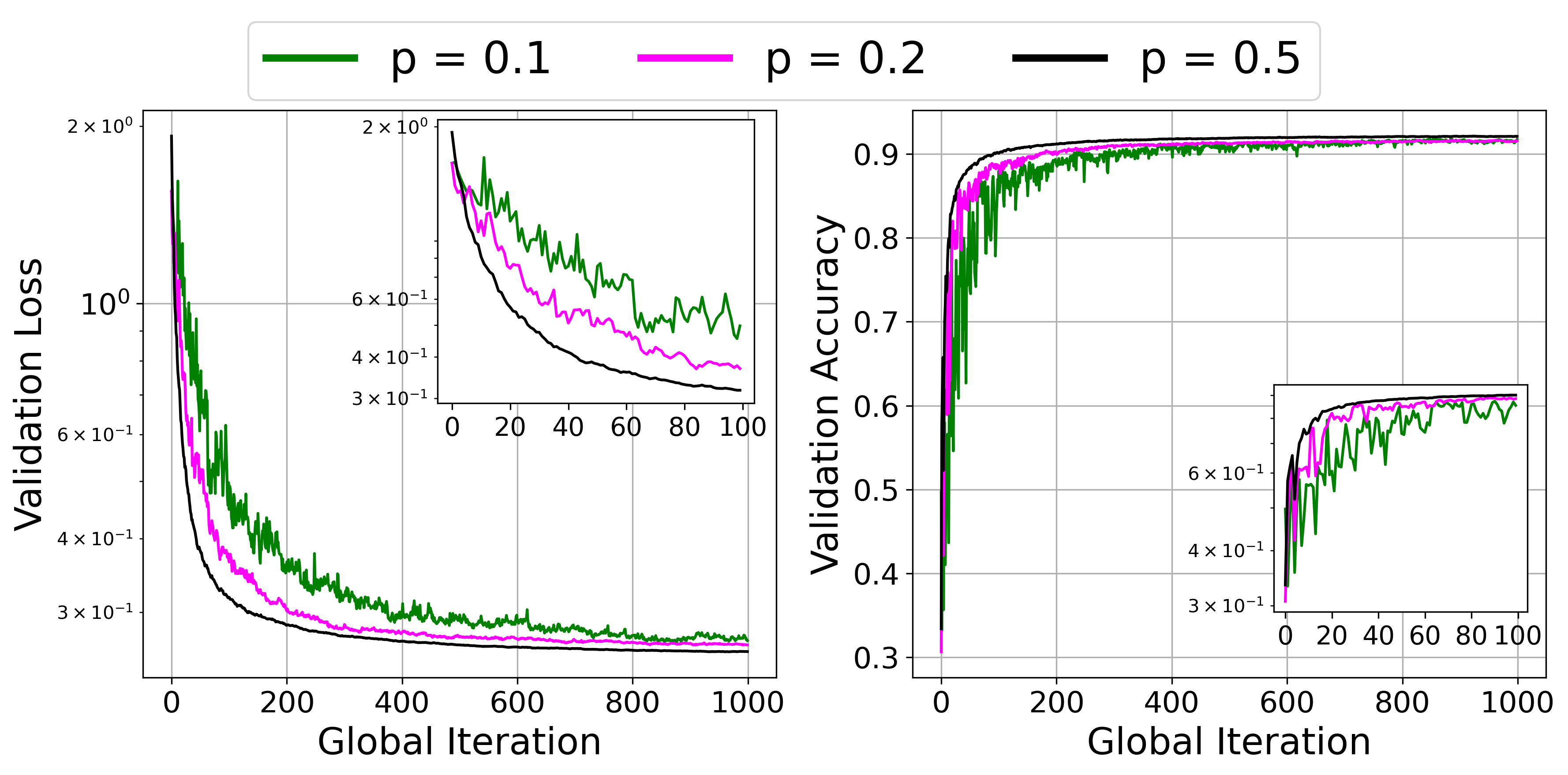}
  \caption{FedFW}
\end{subfigure}%
\hfill %
\begin{subfigure}[b]{0.49\textwidth}
  \centering
  \includegraphics[width=\linewidth]{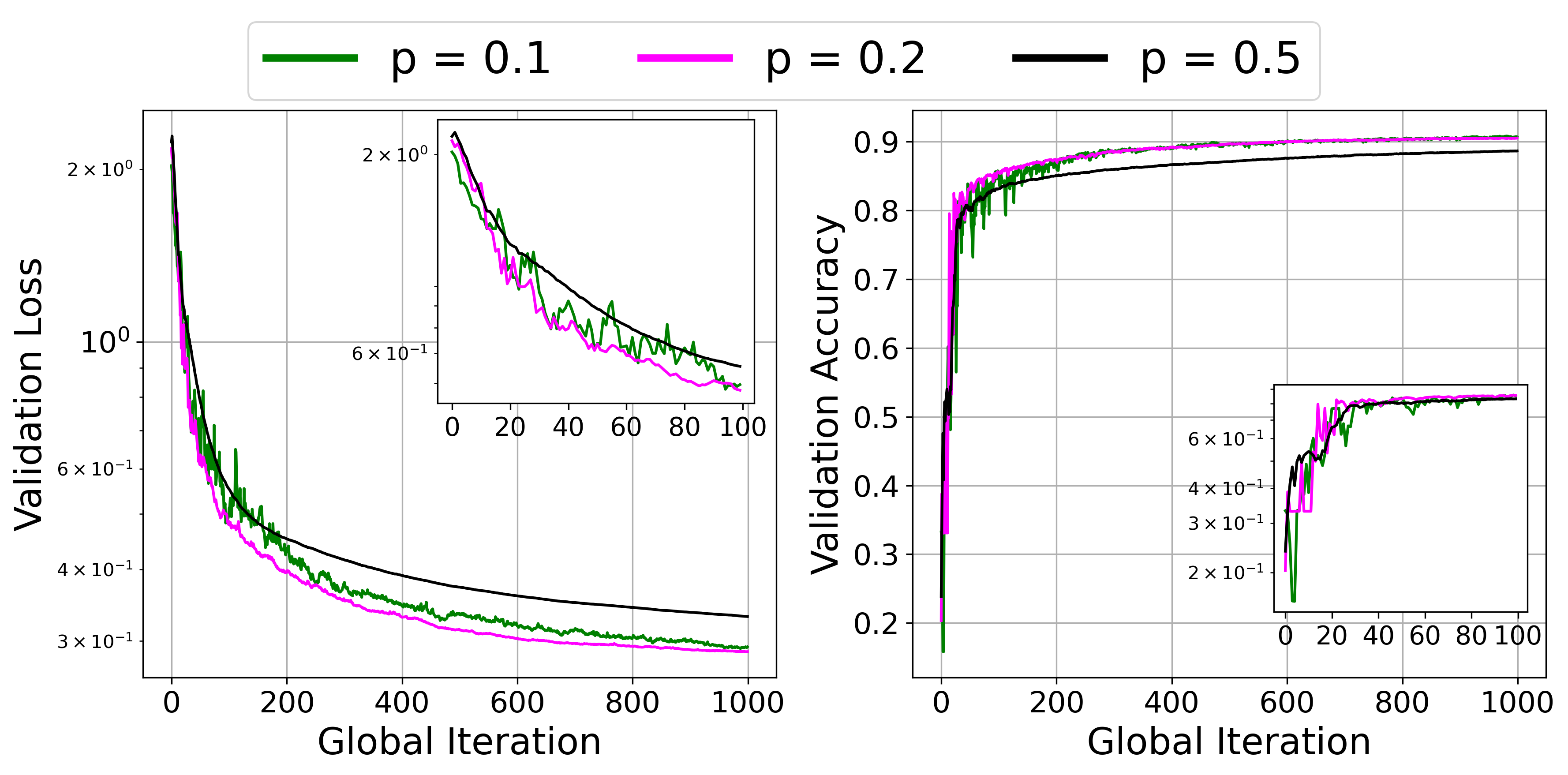}
  \caption{\textsc{FedFW+}}
\end{subfigure}%
\caption{Effect of participation $\texttt{p}$ on \textsc{FedFW} and \textsc{FedFW+}. We trained a DNN model using synthetic data, an $\ell_2$ constraint, and a fixed $\lambda_t = 10^{-3}$.}
\label{fig: partial-participation_DNN}

\end{figure}

\begin{figure}[p]
\centering

\begin{subfigure}{0.85\textwidth}
  \centering
  \includegraphics[width=1\linewidth]{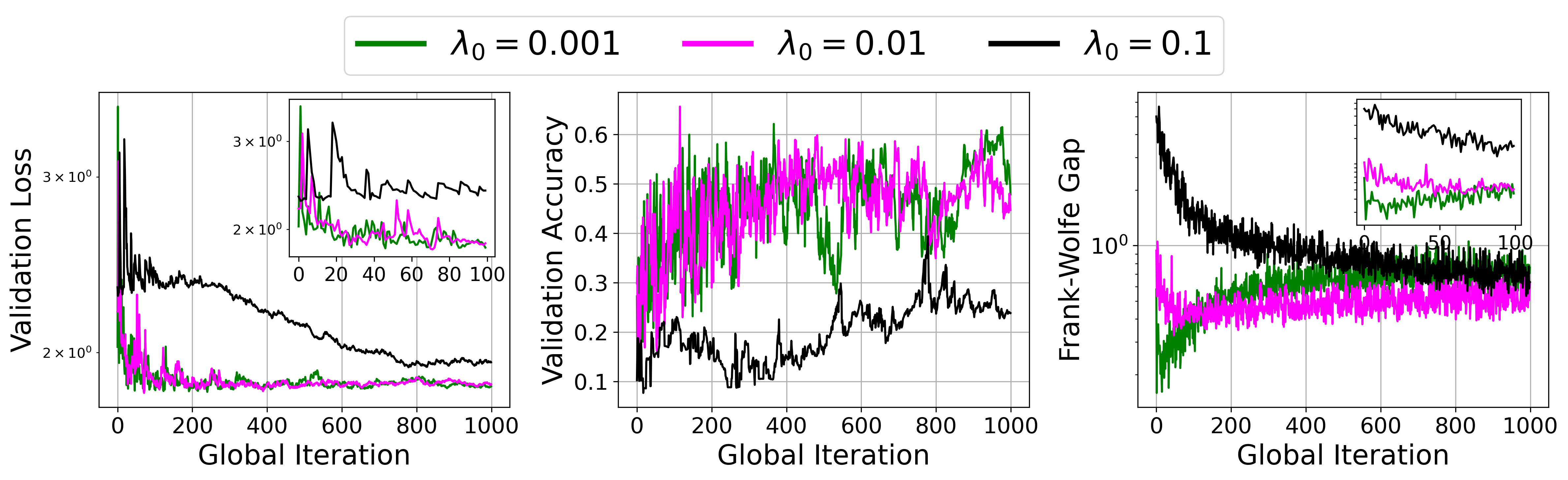}
  \caption{Convex MCLR with MNIST data, participation ratio $\texttt{p}=0.5$, and $\ell_1$ constraint.}
\end{subfigure}

\vspace{0.5em}

\begin{subfigure}{0.85\textwidth}
  \centering
  \includegraphics[width=1\linewidth]{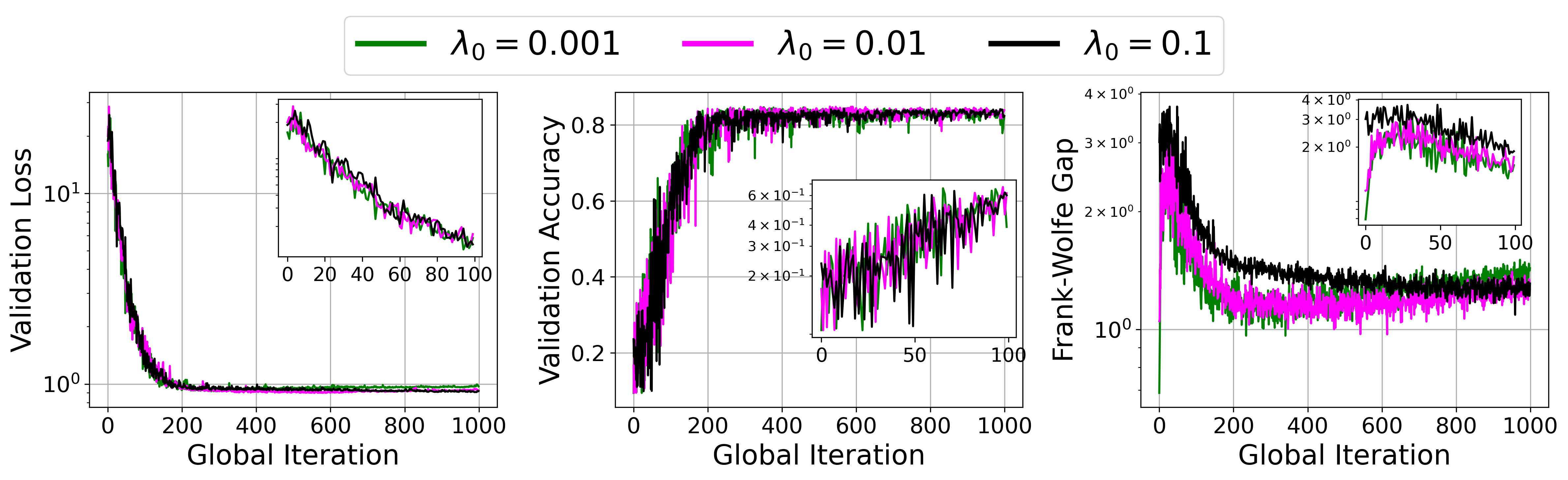}
  \caption{Convex MCLR with MNIST data, participation ratio $\texttt{p}=0.5$, and $\ell_2$ constraint.}
\end{subfigure}%

\vspace{0.5em}

\begin{subfigure}{0.85\textwidth}
  \centering
  \includegraphics[width=\linewidth]{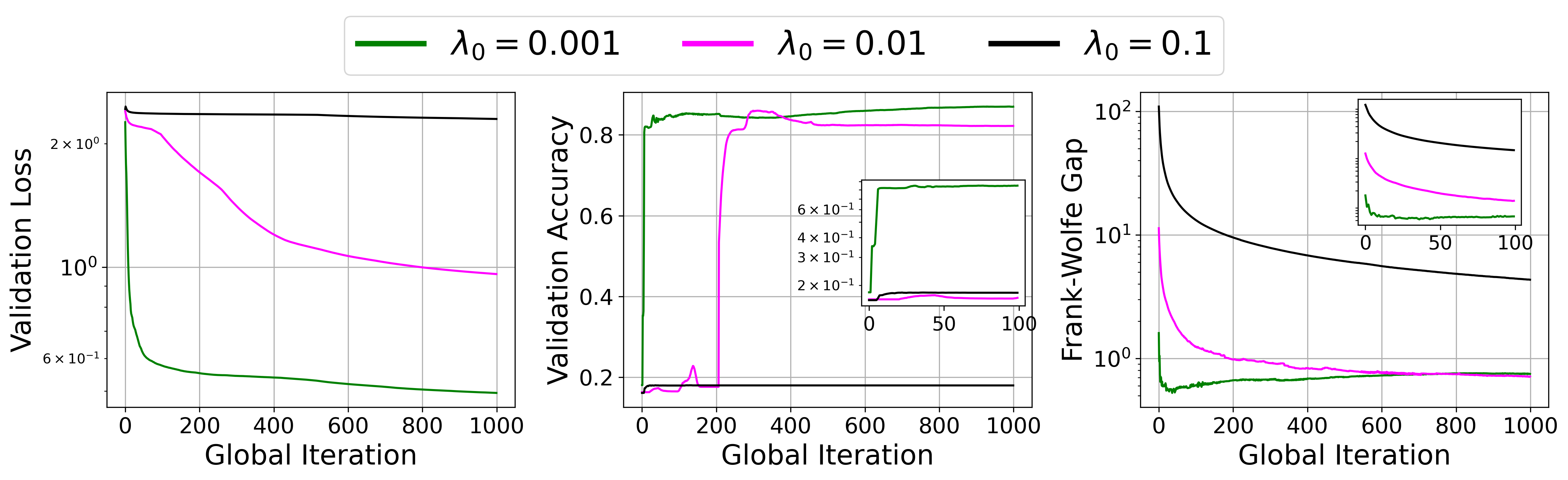}
  \caption{Non-convex DNN with synthetic data, full participation $\texttt{p}=1$, and $\ell_1$ constraint.}
\end{subfigure}%

\vspace{0.5em}

\begin{subfigure}{0.85\textwidth}
  \centering
  \includegraphics[width=\linewidth]{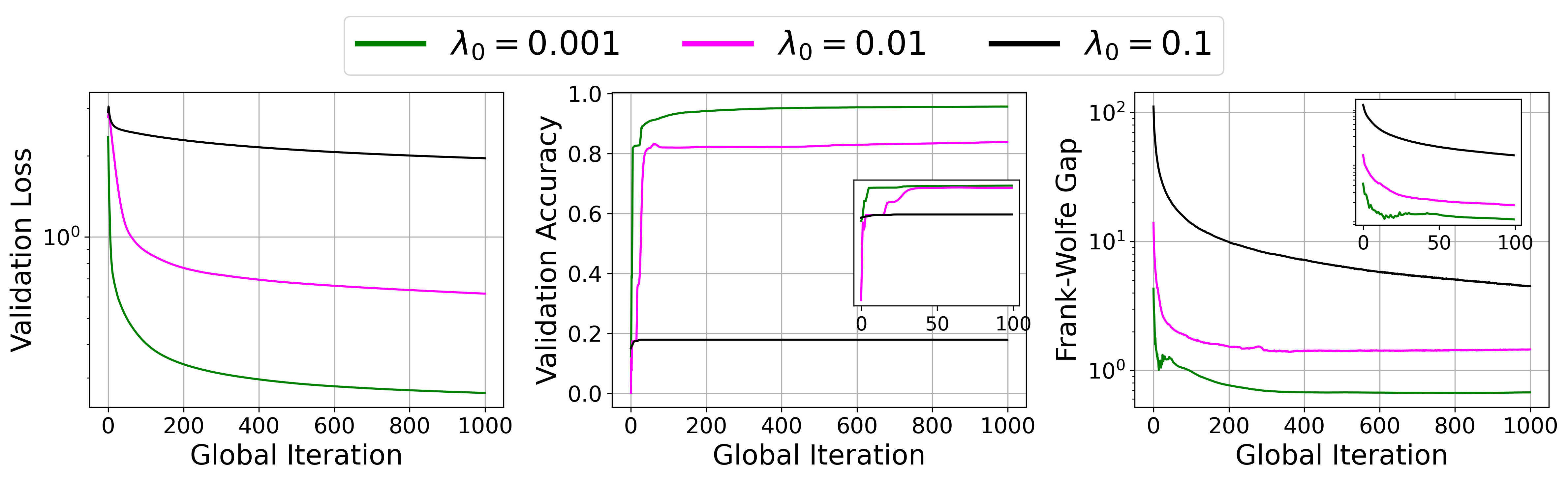}
  \caption{Non-convex DNN with synthetic data, full participation $\texttt{p}=1$, and $\ell_2$ constraint.}
\end{subfigure}%

\caption{Effect of the initial penalty ($\lambda_0$) on \textsc{FedFW}. (a) and (b) show the results for the convex setting, (c) and (d) demonstrates the non-convex setting.
}
\label{fig: hyperparameters_choice}
\end{figure}

\subsection{Impact of hyperparameters}

We conclude our experiments with an ablation study to investigate how varying hyperparameters impact the performance of \textsc{FedFW}. 

\paragraph{Impact of partial participation} \textbf{(}$\texttt{p}$\textbf{).}~
\Cref{fig: partial-participation_MCLR} shows the validation accuracy and loss of \textsc{FedFW} algorithm for synthetic data and MCLR model. 
\Cref{fig: partial-participation_DNN} depicts the validation accuracy and loss of \textsc{FedFW} and \textsc{FedFW+} algorithms for synthetic data and DNN model. 
Both convex and non-convex experiments show faster convergence for higher participation probability. 
It is worth mentioning that variations in $\lambda_0$ do not alter the influence of $\texttt{p}$. 
These observations are in accordance with the theoretical guarantees presented in \Cref{FedFW_with_pp}. 

\paragraph{Impact of initial penalty parameter} \textbf{(}$\lambda_0$\textbf{).}~
\Cref{fig: hyperparameters_choice} illustrates the effect of hyperparameters on the convergence of loss, Frank-Wolfe gap, and validation accuracy of the algorithms. 
A higher $\lambda_0$ leads to a larger gap in the initial iterations of the algorithm due to its regularization effect. 
In other words, increasing $\lambda_0$ enforces the update direction towards the consensus set, which in turn increases the gap value in the first iteration. 
The exact expressions for the constants in the convergence guarantees, which are detailed in the supplementary material, can guide the optimal choice of $\lambda_0$.

\section{Conclusions}
\label{section: Conclusions}

We introduced a FW-type method for FL and established its theoretical convergence rates. The proposed method, \textsc{FedFW}, guarantees $\smash{\mathcal{O}(t^{-1/2})}$ convergence rates when the objective function smooth and convex. If we remove the convexity assumption, the rate reduces to $\smash{\mathcal{O}(t^{-1/3})}$.
With access to only stochastic gradients, \textsc{FedFW} achieves an $\smash{\mathcal{O}(t^{-1/3})}$ convergence rate in the convex setting. 
Additionally, we proposed an empirically faster version of \textsc{FedFW} by incorporating an augmented Lagrangian dual update. 

We conclude with a brief discussion on the limitations of our work. 
The primary limitation of \textsc{FedFW} is its slower convergence rates compared to state-of-the-art FL methods. 
Developing a tighter bound for \textsc{FedFW}, with multiple local steps, is an area for future research. 
Additionally, the analysis of \textsc{FedFW+} is left to future work. 
Another important piece of future work is the convergence analysis of \textsc{FedFW-sto} for non-convex objectives. 
Finally, the development and analysis of an extension for asynchronous updates also remain as future work.

\subsection*{Acknowledgments}
This work was supported by the Wallenberg AI, Autonomous Systems and Software Program (WASP) funded by the Knut and Alice Wallenberg Foundation. 
We also acknowledge support from the Swedish Research Council under the grant registration number 2023-05476. 
The computations were enabled by the Berzelius resource provided by the Knut and Alice Wallenberg Foundation at the National Supercomputer Centre. 
Additionally, computations in an earlier version of this work were enabled by resources provided by the Swedish National Infrastructure for Computing (SNIC) at Chalmers Centre for Computational Science and Engineering (C3SE) partially funded by the Swedish Research Council through grant agreement no. 2018-05973. %
We appreciate the discussions with Yikun Hou on the numerical experiments and implementation. 
We acknowledge the use of OpenAI's ChatGPT for editorial assistance in preparing this manuscript.

\newpage
\appendix

\section{Challenges in Extending FW for FL}
\label{sec:counter-example}

The generic problem template for Federated Learning (FL) that we focus in this paper is as follows: 
\begin{equation}\label{supp:eqn:sec:intro:problem def}
    \min_{\x \in \dom} ~~ F(\x):= ~ \frac{1}{n} \sum_{i=1}^n f_i(\x),
\end{equation}
Our main goal in this paper is to develop a Frank-Wolfe algorithm (FW) for this problem template. 
To extend FW for FL, a natural attempt involves integrating local FW steps with a centralized aggregation step. 
Specifically, consider the following procedure, for $t = 1,\ldots, T$:
\begin{equation}\label{algo:failing-fw}
    \begin{aligned}
        & \s_i^t \in \arg\min_{\x \in \dom} ~ \ip{\nabla f_i(\xb^t)}{\x} & & \text{for~$i = 1,\ldots,n$} \\
        &  \x_i^{t+1} = (1-\eta_t) \xb^t + \eta_t \s_i^t & & \text{for~$i = 1,\ldots,n$}  \\
        &  \xb^{t+1} = \frac{1}{n} \sum_{i=1}^n \x_i^{t+1} \\
    \end{aligned}
\end{equation}
However, this algorithm fails to solve the following 1-dimensional problem: 
\begin{equation}\label{eqn:counter-example}
    \min_{x \in [-1,1]} ~~~ \frac{1}{2}(x-3)^2 + \frac{1}{2}(x+1)^2. 
\end{equation}
This is an instance of the model problem in \eqref{supp:eqn:sec:intro:problem def} with $f_1(x) = (x-3)^2$ and $f_2(x) = (x+1)^2$. 
It has a unique solution at $x^\star = 1$. 
Now, consider procedure \eqref{algo:failing-fw} initialized from $\bar{x}^1 = 0$. Then, $s_1^1 = 1$ and $s_2^1 = -1$, and for any $\eta_1 \in [0,1]$, we get
\begin{equation}
    \bar{x}^2 
    = \frac{1}{2} x_1^2 + \frac{1}{2} x_2^2 
    = (1-\eta_1) \bar{x}^1 + \eta_1 \frac{1}{2} (s_1^1 + s_2^1)
    = 0.
\end{equation}
Therefore, $x = 0$ is a fixed point for the procedure \eqref{algo:failing-fw} although the unique solution of \eqref{eqn:counter-example} is $x = 1$. We conclude that this method may fail. Similar arguments hold even if we run multiple local training steps before the aggregation.

\section{Convergence Analysis of \textsc{FedFW} in the Convex Case} \label{sec: appendix: theorem 1}

In this section, we provide the proof of Theorem~1 and its generalization to the sampled clients scenario.  
First, we mention some preliminary results.

\begin{lemma}\label{lem:sec: appendix: SmoothnessOfhhat} 
Recall the surrogate function $\hat{F}_t$ defined in the main paper:
\begin{equation*}
    \hat{F}_t(\mathbf{X}) = \frac{1}{n} \sum_{i=1}^n f_i(\mathbf{X} \mathbf{e}_i) + \frac{\lambda_t}{2} \dist^2(\mathbf{X},\C), 
\end{equation*}
Suppose each $f_i$ is $L$-smooth, meaning the gradients are Lipschitz continuous with parameter $L$. Then, $\hat{F}_t(\mathbf{X})$ is $\hat{L}_t$-smooth, where $\hat{L}_t = \frac{L}{n} + \lambda_t$.
\end{lemma}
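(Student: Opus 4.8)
The plan is to exploit the additivity of smoothness: if $g$ is $L_1$-smooth and $h$ is $L_2$-smooth, then $g+h$ is $(L_1+L_2)$-smooth. Accordingly, I would split the surrogate as $\hat{F}_t = g + h$, where $g(\X) := \frac{1}{n}\sum_{i=1}^n f_i(\X\mathbf{e}_i)$ is the data-fidelity part and $h(\X) := \frac{\lambda_t}{2}\dist^2(\X,\C)$ is the consensus penalty, and bound the Lipschitz constant of each gradient separately. The claimed constant $\Lhat = \frac{L}{n}+\lambda_t$ then follows by showing $g$ is $\frac{L}{n}$-smooth and $h$ is $\lambda_t$-smooth and adding the two constants. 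All norms are Frobenius norms, consistent with the matrix formulation.

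For the data-fidelity part I would use the gradient expression $\nabla g(\X) = \frac{1}{n}\sum_{i=1}^n \nabla f_i(\X\mathbf{e}_i)\,\mathbf{e}_i^\top$ already recorded in the main text. For two matrices $\X,\Y$ with columns $\x_i,\y_i$, the difference $\nabla g(\X)-\nabla g(\Y) = \frac{1}{n}\sum_{i=1}^n \big(\nabla f_i(\x_i)-\nabla f_i(\y_i)\big)\mathbf{e}_i^\top$ is a sum of matrices supported on distinct columns. Because these supports are mutually orthogonal, the squared Frobenius norm splits as a sum over $i$, giving $\norm{\nabla g(\X)-\nabla g(\Y)}_F^2 = \frac{1}{n^2}\sum_{i=1}^n \nsq{\nabla f_i(\x_i)-\nabla f_i(\y_i)}$. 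Applying $L$-smoothness of each $f_i$, namely $\nsq{\nabla f_i(\x_i)-\nabla f_i(\y_i)} \le L^2 \nsq{\x_i-\y_i}$, and summing yields $\norm{\nabla g(\X)-\nabla g(\Y)}_F \le \frac{L}{n}\norm{\X-\Y}_F$. The essential observation here is the column-orthogonality, which prevents a naive accumulation of constants across clients.

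For the penalty term I would invoke the classical Moreau-envelope fact that, for a nonempty closed convex set $\C$, the map $\X \mapsto \frac12\dist^2(\X,\C)$ is continuously differentiable with gradient $\X - \proj_\C(\X)$, so that $\nabla h(\X) = \lambda_t\big(\X-\proj_\C(\X)\big)$. The crux is to establish that $\X \mapsto \X-\proj_\C(\X)$ is $1$-Lipschitz. This is the main technical obstacle: a naive triangle inequality applied to $(\X-\proj_\C(\X))-(\Y-\proj_\C(\Y))$ together with non-expansiveness of $\proj_\C$ only delivers constant $2$. The sharp constant $1$ instead requires firm non-expansiveness of the projection, from which $I-\proj_\C$ is itself non-expansive; I would handle this by recalling that projection onto a convex set is firmly non-expansive, hence both $\proj_\C$ and $I-\proj_\C$ are $1$-Lipschitz. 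Scaling by $\lambda_t$ shows $h$ is $\lambda_t$-smooth, and combining with the bound on $g$ gives $\Lhat = \frac{L}{n}+\lambda_t$, completing the argument.
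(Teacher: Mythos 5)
Your proof is correct, and the first half (the data-fidelity term) coincides with the paper's argument: both split $\nabla\hat F_t(\X)-\nabla\hat F_t(\Y)$ by the triangle inequality and use the fact that the rank-one matrices $(\nabla f_i(\x_i)-\nabla f_i(\y_i))\mathbf{e}_i^\top$ occupy disjoint columns, so the squared Frobenius norm decouples and $L$-smoothness of each $f_i$ gives the constant $L/n$. Where you genuinely diverge is the penalty term. The paper exploits the fact that $\C$ is a linear subspace: it writes $\X-\proj_\C(\X)-\Y+\proj_\C(\Y)=(\X-\Y)(I_n-\tfrac{1}{n}J_n)$ and uses submultiplicativity of the Frobenius norm together with the observation that the centering matrix is an orthogonal projector with operator norm $1$. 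You instead invoke the general convex-analytic facts that $\tfrac12\dist^2(\cdot,\C)$ has gradient $I-\proj_\C$ for any nonempty closed convex $\C$ and that $I-\proj_\C$ is nonexpansive because $\proj_\C$ is firmly nonexpansive; you are right that the naive triangle inequality only yields the constant $2$ here and that firm nonexpansiveness is what rescues the sharp constant $1$. Your route is more general --- it would prove the lemma verbatim for an arbitrary closed convex penalty set --- at the cost of appealing to the Moreau-envelope differentiability and firm-nonexpansiveness machinery; the paper's route is more elementary and self-contained, reducing the whole penalty bound to a one-line linear-algebra identity, but it is tied to the consensus set being a subspace. Both arguments are valid and deliver the same constant $\hat L_t = \tfrac{L}{n}+\lambda_t$.
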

\begin{proof}
Using the definition of $\hat{F}_t$ and $\nabla \hat{F}_t$ in the main paper, we have 
\begin{align}
    \|\nabla \hat{F}_t(\X)- \nabla \hat{F}_t(\Y) \|_F 
    & = \| \frac{1}{n}\sum_{i=1}^n \left( \nabla f_i(\x_i)-\nabla f_i(\y_i)\right) \cdot \mathbf{e}_i^\top + \lambda_t (\X - \bar{\X} - \Y + \bar{\Y}) \|_F 
    \nonumber \\
    & \leq \frac{1}{n} \| 
    \sum_{i=1}^n \left( \nabla f_i(\x_i)-\nabla f_i(\y_i)\right) \cdot \mathbf{e}_i^\top \|_F +\lambda_t \| \X - \bar{\X} - \Y + \bar{\Y} \|_F
     \nonumber \\
    & = \frac{1}{n} \sqrt{ \sum_{i=1}^n  \| 
      \nabla f_i(\x_i)-\nabla f_i(\y_i) \|_2^2} +\lambda_t \| \X - \bar{\X} - \Y + \bar{\Y} \|_F
    \nonumber \\
    & \leq \frac{L}{n} \sqrt{ \sum_{i=1}^n  \| 
      \x_i-\y_i \|_2^2} +\lambda_t \| \X - \bar{\X} - \Y + \bar{\Y} \|_F
    \nonumber \\
    & = \frac{L}{n} \| 
      \X-\Y \|_F +\lambda_t \| \X - \bar{\X} - \Y + \bar{\Y} \|_F
    \nonumber \\
    &=  
    \frac{L}{n}\|\X-\mathbf{Y}\|_F
    +
    \lambda_t \|(\X-\mathbf{Y})(I_n- \frac{1}{n} J_n) \|_F
    \nonumber \\
    &\leq 
    \frac{L}{n}\|\X-\mathbf{Y}\|_F
    +
    \lambda_t \|(\X-\mathbf{Y}) \|_F
    \nonumber \\
    &= 
    \big(\frac{L}{n}+\lambda_t \big) \|\X-\mathbf{Y}\|_F.
\end{align}
\end{proof}
where $J_n$ is the all-ones matrix of size $p \times n$ and in the $8^{th}$ line, we used submultiplicativity of the Frobenius norm and the fact that the operator norm of the centering matrix $(I_n- \frac{1}{n} J_n)$ equals to $1$.

\begin{lemma}[Boundedness of the gradient] 
\label{lem:background:def-G}
Consider $\hat{F}(\mathbf{X}) := \frac{1}{n} \sum_{i=1}^n f_i(\mathbf{X} \mathbf{e}_i)$ where $f_i(.)$ are $L$-smooth. Then, 
$\norm{\nabla \hat{F}(\mathbf{X})}_F \leq G := \frac{LD}{n} + \max_{\X^* \in \argmin{} \{F(\X)\}  } \{ \norm{\nabla \hat{F}(\mathbf{X}^*)} \}$ for all $\X \in {\dom}^n$.

\begin{proof}
Let us define $\X^* \in \argmin{\X \in \dom^n} \{F(\X)\} $. Using the smoothness and boundedness assumptions, we can write
\begin{align*}
\norm{\nabla \hat{F}(\mathbf{X})}_F 
&\leq \norm{\nabla \hat{F}(\mathbf{X}) -  \nabla \hat{F}(\mathbf{X}^*) }_F
+
\norm{\nabla \hat{F}(\mathbf{X}^*)}_F
\leq \frac{L}{n} \, \norm{\X - \X^* }_F
+
\norm{\nabla \hat{F}(\mathbf{X}^*)}_F
\leq \frac{LD}{\sqrt{n}}
+
\norm{\nabla \hat{F}(\mathbf{X}^*)}_F.
\end{align*}
By definition of $\hat{F}$, and since $\mathbf{X}^*$ is a feasible point, we get
\begin{equation*}
\norm{\nabla \hat{F}(\mathbf{X}^*)}_F 
= \norm{\nabla \hat{F}(\mathbf{x}^* \mathbf{1}^\top)}_F 
\leq \sqrt{n} \, \norm{\nabla F(\mathbf{x}^*) \, \mathbf{1}^\top}_F 
= n \, \norm{\nabla F(\mathbf{x}^*)},
\end{equation*}
where $\mathbf{1} \in \mathbb{R}^n$ is the vector of ones. 
We complete the proof by combining this with the previous inequality. 

{\hfill $\square$}
\end{proof}

\end{lemma}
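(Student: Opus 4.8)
The plan is to bound $\norm{\nabla \hat F(\X)}_F$ by comparing the arbitrary feasible point $\X$ against an optimal one and exploiting smoothness. First I would fix a minimizer $\X^* \in \argmin{\X \in \dom^n}\{F(\X)\}$; under the consensus reformulation, $\X^*$ may be taken to be a consensus matrix whose common column minimizes $F$ over $\dom$. The whole argument then rests on a single decomposition via the triangle inequality,
\[
\norm{\nabla \hat F(\X)}_F \;\le\; \norm{\nabla \hat F(\X) - \nabla \hat F(\X^*)}_F \;+\; \norm{\nabla \hat F(\X^*)}_F,
\]
which splits the quantity into a ``distance-to-optimum'' term and a ``gradient-at-optimum'' term. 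The second term is already in the form appearing in $G$, so it is the first term that must be controlled.

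For the first term I would invoke \Cref{lem:sec: appendix: SmoothnessOfhhat} with $\lambda_t = 0$: the map $\X \mapsto \frac1n\sum_{i=1}^n f_i(\X\mathbf{e}_i)$ is precisely the surrogate $\hat F_t$ stripped of its penalty, hence it is $\tfrac{L}{n}$-smooth, giving $\norm{\nabla \hat F(\X) - \nabla \hat F(\X^*)}_F \le \tfrac{L}{n}\norm{\X - \X^*}_F$. The second ingredient is a diameter estimate on the matrix difference: since $\X$ and $\X^*$ both lie in $\dom^n$, each column difference $\x_i - \x_i^*$ lies in the difference set of $\dom$ and so has norm at most $D$, whence $\norm{\X - \X^*}_F = \big(\sum_{i=1}^n \norm{\x_i - \x_i^*}^2\big)^{1/2} \le \sqrt{n}\,D$. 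Combining the two estimates bounds the first term by $\tfrac{L}{n}\cdot\sqrt{n}\,D = \tfrac{LD}{\sqrt{n}}$, and taking the maximum over all minimizers in the second term yields the claimed constant $G$.

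There is no deep obstacle here: the proof is a one-line decomposition followed by a smoothness bound and a diameter bound. The step that most deserves care is the constant tracking in the two norm reductions, where the $\sqrt{n}$ from the Frobenius-to-Euclidean passage must be weighed against the $1/n$ in the smoothness constant; the natural derivation produces a leading term $\tfrac{LD}{\sqrt{n}}$ rather than $\tfrac{LD}{n}$, so one should double-check the intended normalization of $D$ and the domain. As an optional refinement, $\norm{\nabla \hat F(\X^*)}_F$ can be made explicit by writing $\X^* = \x^*\mathbf{1}^\top$ and bounding it by $n\norm{\nabla F(\x^*)}$, turning $G$ into a fully concrete constant.
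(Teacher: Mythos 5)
Your proposal is correct and follows essentially the same route as the paper's proof: triangle inequality about a consensus minimizer $\X^* = \x^*\mathbf{1}^\top$, the $\tfrac{L}{n}$-smoothness of $\hat F$ (the $\lambda_t=0$ case of \Cref{lem:sec: appendix: SmoothnessOfhhat}), the column-wise diameter bound $\norm{\X-\X^*}_F \le \sqrt{n}\,D$, and the explicit bound $\norm{\nabla\hat F(\X^*)}_F \le n\norm{\nabla F(\x^*)}$. Your observation that the derivation yields a leading term $\tfrac{LD}{\sqrt{n}}$ rather than the $\tfrac{LD}{n}$ appearing in the stated constant $G$ is well taken --- the paper's own proof chain also ends at $\tfrac{LD}{\sqrt{n}}$, so the discrepancy lies in the lemma statement, not in your argument.
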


\subsection{Proof of Theorem~1}
By \Cref{lem:sec: appendix: SmoothnessOfhhat}, $\hat{F}_{t}$ is $\hat{L}_t$-smooth. 
Therefore, we have
\begin{align}
	 \label{algo1-convex: eq1}
	\hat{F}_{t}(\X^{t+1})
	&\leq \hat{F}_{t} ({\Xt}) + 
	\ip{\nabla \hat{F}_{t} ({\Xt})}{\X^{t+1}-\Xt}
	+\frac{\Lhat}{2} \| \X^{t+1}-\Xt \|_F^2
	\nonumber \\
	&= \hat{F}_{t} ({\Xt}) + \eta_t
	\ip{\nabla \hat{F}_{t} ({\Xt})}{\mathbf{S}^t-\Xt }
	+\frac{\Lhat \eta_t^2}{2} \| \mathbf{S}^t-\Xt \|_F^2
	\nonumber \\
	&\leq  \hat{F}_{t} ({\Xt}) + \eta_t
	\ip{\nabla \hat{F}_{t} ({\Xt})}{\mathbf{S}^t-\Xt }
	+\frac{\Lhat \eta_t^2}{2} n D^2 \nonumber \\
	&\leq  \hat{F}_{t} ({\Xt}) + \eta_t
	\ip{\nabla \hat{F}_{t} ({\Xt})}{\X^*-\Xt }
	+\frac{\Lhat \eta_t^2}{2} n D^2,
\end{align}
where the second line follows the definition of $\X^{t+1}$, the third line holds since $\dom$ is compact, and the last line holds by definition of $\mathbf{S}^t$. Now, let us define $\hat{F}(\mathbf{X}) := \frac{1}{n} \sum_{i=1}^n f_i(\mathbf{X} \mathbf{e}_i)$. Then, we get
\begin{align}\label{algo1-convex: eq2}
    \ip{\nabla \hat{F}_t ({\Xt})}{\X^*-\Xt} 
    & = \ip{\nabla \hat{F}(\Xt) }{\X^*-\Xt} + \lambda_t \ip{ \Xt - \proj_\C(\Xt) }{\X^*-\Xt} \nonumber \\
    & \leq \hat{F}(\X^*)-\hat{F}(\Xt) + \lambda_t \ip{ \Xt - \proj_\C(\Xt) }{\X^*-\Xt} \nonumber \\
    & = \hat{F}(\X^*)-\hat{F}(\Xt) + \lambda_t \ip{ \Xt - \proj_\C(\Xt) }{\X^*-\proj_\C(\Xt)} - \lambda_t \|\Xt - \proj_\C(\Xt)\|_F^2 \nonumber \\
    & \leq \hat{F}(\X^*)-\hat{F}(\Xt) - \lambda_t \|\Xt - \proj_\C(\Xt)\|_F^2 \nonumber \\
    & = \hat{F}(\X^*)-\hat{F}_t(\Xt) - \frac{\lambda_t}{2} \dist^2(\Xt,\C),
\end{align}
where we used non-expansive of the projection in the second inequality.
Combining \eqref{algo1-convex: eq1} and  \eqref{algo1-convex: eq2}, and subtracting $\hat{F} ({\X^*})$ from both sides,
we get
\begin{align} \label{appendix: dummy relation before subtraction}
	\hat{F}_{t}(\X^{t+1}) - \hat{F} ({\X^*})
	&\leq (1-\eta_t) \big( \hat{F}_{t} ({\Xt}) - \hat{F} ({\X^*}) \big) - \frac{\eta_t \lambda_t}{2}  \dist^2(\Xt,\C) +
	\frac{\Lhat \eta_t^2}{2} n D^2.
\end{align}
To arrive at telescopic terms, we need to re-write $\hat{F}_t(\Xt)$ based on $\hat{F}_{t-1}(\Xt)$ on the right-hand side. Note that 
\begin{align} 
  \hat{F}_t  ({\Xt}) 
  & = \hat{F}_{t-1}({\Xt}) + \frac{1}{2}(\lambda_t - \lambda_{t-1}) \dist^2(\Xt,\C).
  \label{appendix: Fhatt relation with t-1}
\end{align}
Substituting \eqref{appendix: Fhatt relation with t-1} into \eqref{appendix: dummy relation before subtraction}, we obtain
\begin{align}
\label{appendix:eq: dummy1}
    \hat{F}_{t}(\X^{t+1}) - \hat{F} ({\X^*})
    & \leq (1-\eta_t) \Big(\hat{F}_{t-1}(\mathbf{\Xt}) - \hat{F}({\X^*})\Big) + \frac{1}{2} \Big( (1-\eta_t) (\lambda_t - \lambda_{t-1}) - \eta_t\lambda_t \Big) \dist^2(\Xt,\C) + \frac{\Lhat \eta_t^2}{2} n D^2 \nonumber \\
    &\leq (1-\eta_t) \Big( \hat{F}_{t-1}(\mathbf{\Xt}) - \hat{F} ({\X^*})\Big) +	\frac{\Lhat \eta_t^2}{2} n D^2,
\end{align}
where the second line holds because, for all $t \geq 1$, we have
\begin{align}
    (1-\eta_t) (\lambda_t - \lambda_{t-1}) - \eta_t\lambda_t
    & = \lambda_0 \left( \frac{t-1}{t+1} (\sqrt{t+1} - \sqrt{t}) - \frac{2}{\sqrt{t+1}} \right) \nonumber \\
    & = \frac{\lambda_0}{\sqrt{t+1}} \left( \frac{t-1}{t+1} (t+1 - \sqrt{t+1}\sqrt{t}) - 2 \right) \nonumber \\
    & \leq \frac{\lambda_0}{\sqrt{t+1}} \left( \frac{t-1}{t+1} - 2 \right) 
    \leq - \frac{\lambda_0}{\sqrt{t+1}} 
    \leq 0.
\end{align}
By recursively applying the inequality in \eqref{appendix:eq: dummy1}, we get
\begin{align}
	\hat{F}_{t}(\X^{t+1}) - \hat{F} ({\X^*}) 
 & \leq \Big(\hat{F}_{0}(\X^1) - \hat{F} ({\X^*}) \Big) \prod_{i=1}^t (1-\eta_i) + \frac{nD^2}{2} \sum_{i=1}^t \hat{L}_i \eta_i^2 \prod_{j=i+1}^t (1-\eta_j) 
 \nonumber \\
 & = \frac{nD^2}{2} \sum_{i=1}^t \hat{L}_i \eta_i^2 \prod_{j=i+1}^t (1-\eta_j),
 \label{appendix:eq: dummy-new}
\end{align}
where the equality holds simply from $\eta_1 = 1$. Substituting $\eta_t = 2/(t+1)$, $\lambda_t = \lambda_0 \sqrt{t+1}$, and $\hat{L}_t:=\left(\frac{L}{n}+\lambda_t\right)$ into \eqref{appendix:eq: dummy-new}, it is easy to verify that 
\begin{align} \label{appendix: results theorem1 Fhatt}
	\hat{F}_{t}(\X^{t+1}) - \hat{F} ({\X^*}) \leq 2 nD^2\left(\frac{L/n}{t+1}+\frac{\lambda_0}{\sqrt{t+1}} \right).
\end{align}
This gives the convergence rate for the surrogate function. We want to show the convergence in terms of the original function $\hat{F}.$ To this end, we first define the Lagrangian for problem (3):
\begin{equation}
    \mathcal{L} (\X,\mathbf{R},\Y) := \hat{F}(\X) + \ip{\Y}{\X - \mathbf{R}} \quad \text{with} \quad \X \in \dom^n, ~ \mathbf{R} \in \C,
\end{equation}
where $\Y$ is the dual variable. Denote the solution to the dual problem by $\Y^*$. From the Lagrange saddle point theory, we know that the following bound holds $\forall ~ \X \in \dom^n$ and $\forall ~  \mathbf{R} \in \mathcal{C}$:
\begin{align} 
\label{appendix:eq: dummy2}
   \hat{F} ({\X^*}) \leq \mathcal{L} (\X,\mathbf{R},\Y^\star)=\hat{F} ({\X})+\ip{\Y^\star}{\X-\mathbf{R}} \leq
    \hat{F} ({\X})+\norm{\Y^\star}_F ~\norm{\X-\mathbf{R}}_F.
\end{align}
Given that $\X^{t+1} \in \dom^n$, we can write 
\begin{equation} \label{appendix: lowerbound for Fhat}
 \hat{F} ({\X^{t+1}})-\hat{F} ({\X^*}) \geq 
 -~ \min_{\mathbf{R} \in \mathcal{C}} \norm{\Y^\star}_F~\norm{\X^{t+1}-\mathbf{R}}_F = -\norm{\Y^\star}_F ~ \dist(\X^{t+1},\mathcal{\mathcal{C}}).
\end{equation}
Now, we formulate \eqref{appendix: results theorem1 Fhatt},
\begin{equation} \label{appendix: upperrbound for Fhat}
\underbrace{\hat{F} ({\X^{t+1}})-\hat{F} ({\X^*}) + \frac{\lambda_t}{2} \dist(\X^{t+1},\C)^2}_{=	\hat{F}_{t}(\X^{t+1})  -
	\hat{F} ({\X^*})} \leq 2 nD^2\left(\frac{L/n}{t+1}+\frac{\lambda_0}{\sqrt{t+1}} \right).
\end{equation}
Combining this with \eqref{appendix: lowerbound for Fhat}, we get
\begin{align}
-\norm{\Y^\star} ~ \dist(\X^{t+1},\mathcal{\mathcal{C}})+ \frac{\lambda_t}{2} \dist^2(\X^{t+1},\C) &\leq 2 nD^2\left(\frac{L/n}{t+1}+\frac{\lambda_0}{\sqrt{t+1}} \right).
\end{align}
Solving the quadratic inequality in terms of $\dist(\X^{t+1},\C)$, we obtain
\begin{align}\label{appendix: upperrbound for distance}
\dist(\X^{t+1},\C)&\leq 
\frac{1}{\lambda_t}
\left ( \norm{\Y^\star}_F
+\sqrt{\norm{\Y^\star}_F^2 +4nD^2\lambda_t
\left(\frac{L/n}{t+1}+\frac{\lambda_0}{\sqrt{t+1}} \right)
}
\right)
\nonumber\\
& \leq \frac{2}{\lambda_0 \sqrt{t+1}}
\left (\norm{\Y^\star}_F
+D \sqrt{\lambda_0
\left(\frac{L}{\sqrt{t+1}}+n\lambda_0 \right)
}
\right) \nonumber\\
& \leq \frac{2}{\lambda_0 \sqrt{t+1}}
\left (\norm{\Y^\star}_F
+D \sqrt{\lambda_0
(L+n\lambda_0)
}
\right).
\end{align}
Using convexity of $\hat{F}$, we can write
\begin{align}\label{appendix: dummy equations no1}
\hat{F}(\Xbt)-\hat{F}(\X^*) &\leq 
\hat{F}(\Xt)-\hat{F}(\X^*)+\ip{\nabla \hat{F}(\Xbt)}{\Xbt-\Xt} 
\nonumber \\
&\leq
\hat{F}(\Xt)-\hat{F}(\X^*) +\norm{\nabla \hat{F}(\Xbt)}_F\cdot \norm{\Xbt-\Xt}_F 
\nonumber \\
&\leq 2 nD^2\left(\frac{L/n}{t}+\frac{\lambda_0}{\sqrt{t}} \right)+G 
\frac{2}{\lambda_0 \sqrt{t}}
\left(  \norm{\Y^\star}
+ D \sqrt{\lambda_0(L/\sqrt{t} +n \lambda_0)}  \right)
\quad (\text{see \Cref{lem:background:def-G}})
,
\end{align}
where in the last line, we used bounds \eqref{appendix: upperrbound for Fhat} and \eqref{appendix: upperrbound for distance}. 
Finally, we note that this provides a bound for $F(\xb^t)-F(\x^*)$, since we have the following equality by definition of $\hat{F}$: 
\begin{align}\label{appendix: dummy equations no2}
\hat{F}(\Xbt)-\hat{F}(\X^*) 
= \frac{1}{n} \sum_{i=1}^n f_i(\Xbt \mathbf{e}_i)-\frac{1}{n} \sum_{i=1}^n f_i(\X^* \mathbf{e}_i)
= \frac{1}{n} \sum_{i=1}^n f_i(\xb^t)-\frac{1}{n} \sum_{i=1}^n f_i(\x^*)
=F(\xb^t)-F(\x^*).
\end{align}
This completes the proof.

{\hfill $\square$}

\subsection{Extension of Theorem~1 for Partial Client Participation}
\label{sec:partial-participation-convex}

We assume that each client participates in each round with probability $\texttt{p}$. 
Note that 
\begin{align}
\x_i^{t+1}=
\begin{cases}
 ~(1-\eta_t) \x_i^t + \eta_t \mathbf{s}_i^t, & \text{with probability } \texttt{p}
\\ 
~\x_i^{t} & \text{with probability } 1-\texttt{p}.
\end{cases}
\end{align}
Therefore, $\mathbb{E}_t [\x_i^{t+1}-\x_i^{t}]=0\times (1-\texttt{p})+ \eta_t (\mathbf{s}_i^t-\x_i^t)\times  \texttt{p}$ and $\mathbb{E}_t [\norm{\x_i^{t+1}-\x_i^{t}}^2]=0\times (1-\texttt{p})+ \eta_t^2 \norm{\mathbf{s}_i^t-\x_i^t}^2  \times \texttt{p}$. 
We start by using smoothness, to obtain
\begin{align}
	\hat{F}_{t}(\X^{t+1}) - \hat{F} ({\X^*})
	& \leq  \hat{F}_{t} ({\Xt}) - \hat{F} ({\X^*}) + 
	\ip{\nabla \hat{F}_{t} ({\Xt})}{\X^{t+1}-\Xt }
	+\frac{\Lhat}{2} \norm{\X^{t+1}-\Xt}_F^2. 
\end{align}
Taking conditional expectation given the history
prior to round $t$, we get 
\begin{align}
\mathbb{E}_t \Big[ \hat{F}_{t}(\X^{t+1})-F^* \Big]
	&\leq \mathbb{E}_t \Big[\hat{F}_{t} ({\Xt})-F^*\Big]
 + 
	\ip{\nabla \hat{F}_{t} ({\Xt})}{\mathbb{E}_t \Big[ \X^{t+1}-\Xt\Big]}
	+\frac{\Lhat}{2} \mathbb{E}_t \Big[ \| \X^{t+1}-\Xt \|_F^2 \Big]
 \nonumber \\
 &\leq  \mathbb{E}_t \Big[\hat{F}_{t} ({\Xt})-F^*\Big]
 + \eta_t \texttt{p}
	\ip{\nabla \hat{F}_{t} ({\Xt})}{\mathbf{S}^t-\Xt }
	+\frac{\Lhat \eta_t^2}{2} n \texttt{p} D^2
 \nonumber \\
	&\leq  \mathbb{E}_t \Big[\hat{F}_{t} ({\Xt})-F^*\Big] + \eta_t \texttt{p}
	\Big(
	\hat{F} ({\X^*})-\hat{F}_t  ({\Xt})-\frac{\lambda_t}{2}  \dist(\Xt,\C)^2 \Big)+
	\frac{\Lhat \eta_t^2}{2} n \texttt{p} D^2
 \quad (\text{see \eqref {appendix: dummy relation before subtraction}})
 \nonumber \\
 &\leq 
\mathbb{E}_t \Big[ \hat{F}_{t-1}  ({\Xt})+\frac{\lambda_{t}}{2} \big(
   \frac{\lambda_{t}}{\lambda_{t-1}}-1 
  \big) ~ \dist^2(\Xt,\C)-F^*\Big] 
  \nonumber \\
  &+\eta_t \texttt{p}
	\Big(
	\hat{F} ({\X^*})-\hat{F}_t  ({\Xt})-\frac{\lambda_t}{2}  \dist(\Xt,\C)^2 \Big)+
	\frac{\Lhat \eta_t^2}{2} n \texttt{p} D^2
 \quad(\text{see \eqref{appendix: Fhatt relation with t-1}})
 \nonumber \\
 &\leq (1-\eta_t \texttt{p}) \mathbb{E}_t \Big[ \hat{F}_{t-1}  ({\Xt})-F^*\Big]
 +\zeta_t ~ \dist^2(\Xt,\C)
 +	\frac{\Lhat \eta_t^2}{2} n \texttt{p} D^2
  \nonumber \\
 &\leq (1-\eta_t \texttt{p}) \mathbb{E}_t \Big[ \hat{F}_{t-1}  ({\Xt})-F^*\Big]
 +	\frac{\Lhat \eta_t^2}{2} n \texttt{p} D^2 
 \quad (\zeta_t < 0),
\end{align}
where in the second line we used the fact that the problem is block separable. By recursively applying this inequality, we get
\begin{align}
\mathbb{E}_t \Big[ \hat{F}_{t}(\X^{t+1})-F^* \Big]
	&\leq \frac{2  nD^2}{\texttt{p}}\left(\frac{L/n}{t+2/\texttt{p}}+\frac{\lambda_0}{\sqrt{t+2/ \texttt{p}}} \right)=\mathcal{O}\Big( \frac{1}{\sqrt{t+2/\texttt{p}}} \Big).
\end{align}
Following the same steps in  \eqref{appendix: dummy equations no1} and \eqref{appendix: dummy equations no2}, we can write
\begin{align}
F(\xb^t)-F^* &\leq \frac{2D^2L}{\texttt{p} ~(t+2/\texttt{p} -1)}+\frac{1}{\sqrt{t+2/\texttt{p}-1}} \left(\frac{2 nD^2 \lambda_0}{\sqrt{\texttt{p}}} +\frac{2G}{\sqrt{n}\sqrt{\texttt{p}} \lambda_0} \left(  \norm{\Y^\star}+ D \sqrt{\lambda_0(L +n \lambda_0)}  \right)\right)
\nonumber \\
&= \frac{2D^2L}{\texttt{p} t+2-\texttt{p}}+\frac{1}{\sqrt{\texttt{p}t+2-\texttt{p}}} \left(2 nD^2 \lambda_0 +\frac{2G}{\sqrt{n} \lambda_0} \left(  \norm{\Y^\star}+ D \sqrt{\lambda_0(L +n \lambda_0)}  \right)\right).
\end{align}
This completes the proof.
{\hfill $\square$}

\section{Convergence Analysis of \textsc{FedFW} in the Non-Convex Case} \label{sec: appendix: theorem 2}
In this section, we provide the proofs for  %
Theorem~2 and its generalization to the sampled clients scenario.
\subsection{Proof of Theorem~2} \label{sec: appendix: theorem 2 subsection 1}
Define the surrogate gap function:
\begin{equation}
    \sgap(\X) := \max_{\mathbf{U} \in \dom^n}\langle \nabla \hat{F}_T ({\X}), \X - \mathbf{U}\rangle. 
\end{equation}
The proof of this theorem involves three parts. 
In part {\bf (a)}, we show that the surrogate gap function converges as  
\begin{align}
\sgap(\X^{\tilde{t}}) := \min_{1\leq t \leq T} ~ \sgap(\Xt) &\leq \mathcal{O}\left(\frac{1}{T^{\frac{1}{3}}}\right).
\label{eq: appendix: generalization to non-connvex case part1}
\end{align}
Note that this does not immediately imply that the original gap function converges to zero. To address this, in part {\bf (b)}, we demonstrate that $\Xt$ is converging toward the consensus set:
\begin{align}
\dist(\X^{\tilde{t}} , \mathcal{C}) &\leq \mathcal{O}\left(\frac{1}{T^{\frac{1}{3}}}\right).
\label{eq: appendix: generalization to non-connvex case part2}
\end{align}
Finally, in part {\bf (c)}, we combine these two results to demonstrate that the original gap function converges at the same rate: 
\begin{align}
    \min_{1\leq t \leq T} ~ \gap(\xb^t) \leq \mathcal{O}(T^{-1/3}).
\end{align}

\vspace{0.5em}
\noindent
\textbf{(a)}~
We consider the parameter choices 
$\eta_t=\eta=T^{-2/3}$ and $\lambda_t=\lambda=\lambda_0 T^{1/3}$ for all $t = 1, \ldots, T$. 
Similarly, the surrogate function $\hat{F}_1=\ldots=\hat{F}_T$. 
Moreover, by \Cref{lem:sec: appendix: SmoothnessOfhhat}, we know that $\hat{F}_t$ is smooth with parameter $\hat{L}_t = \hat{L} = \frac{L}{n} + \lambda$. Then, we have
\vspace{-0.5em}
	\begin{align}
	  \label{algo1-nonconvex: eq1}
	\hat{F}_T(\X^{t+1})
	&\leq \hat{F}_T ({\Xt}) + 
	\ip{\nabla \hat{F}_T ({\Xt})}{\X^{t+1}-\Xt}
	+\frac{\hat{L}}{2} \| \X^{t+1}-\Xt \|_F^2
	\nonumber \\
	&= \hat{F}_T ({\Xt}) + \eta
	\ip{\nabla \hat{F}_T ({\Xt})}{\mathbf{S}^t-\Xt }
	+\frac{\hat{L} \eta^2}{2} \| \mathbf{S}^t-\Xt \|_F^2
	\nonumber \\
	&\leq  \hat{F}_T ({\Xt}) + \eta
	\ip{\nabla \hat{F}_T ({\Xt})}{\mathbf{S}^t-\Xt }
	+\frac{\hat{L} \eta^2}{2} n D^2
	\nonumber \\
	&= \hat{F}_T ({\Xt}) -\eta \, \sgap(\X^t) 
	+\frac{\hat{L}\eta^2}{2}   nD^2.
	\end{align}
\noindent
Rearranging \eqref{algo1-nonconvex: eq1}, taking the sum of both sides from $t=1$ to $t=T$ and dividing by $\eta T$, we get 
\begin{align} \label{eq: algo1-nonconvex 2}
\frac{1}{T} \sum_{t=1}^{T} \sgap(\X^t) &\leq \frac{1}{\eta T}  \sum_{t=1}^{T} \left (\hat{F}_T ({\Xt})-\hat{F}_T ({\X^{t+1}})   \right)
	+\frac{\hat{L}\eta}{2} n D^2
	\nonumber \\
&=\frac{1}{\eta T} \left (\hat{F}_T ({\X^1})-\hat{F}_T ({\X^{T+1}})   \right)
	+\frac{\hat{L}\eta}{2}   nD^2.
\end{align}
\vspace{-0.5em}
Note that 
\begin{align} \label{eq: dummy_non_convex1}
\hat{F}_T ({\X^1}) - \hat{F}_T ({\X^{T+1}}) 
&=
\frac{1}{n} \sum_{i=1}^n f_i(\X^1 \mathbf{e}_i) + \frac{\lambda}{2} \dist^2(\X^1,\C) 
-
\frac{1}{n} \sum_{i=1}^n f_i(\X^{T+1} \mathbf{e}_i) - \frac{\lambda}{2} \dist^2(\X^{T+1},\C)
\nonumber \\
&\leq
\frac{1}{n} \sum_{i=1}^n f_i(\X^1 \mathbf{e}_i)
-
\frac{1}{n} \sum_{i=1}^n f_i(\X^{T+1} \mathbf{e}_i)
\nonumber \\
&=
\frac{1}{n} \sum_{i=1}^n f_i(\x^1)
-
\frac{1}{n} \sum_{i=1}^n f_i(\x_i^{t+1})
\nonumber \\
&\leq
\frac{1}{n} \sum_{i=1}^n f_i(\x^1)
-
\frac{1}{n} \sum_{i=1}^n \min_{\x \in \dom} f_i(\x)
:= \mathcal{E},
\end{align}
where the second line holds since we choose $\X^1 \in \C$. 
Note that $\mathcal{E}$ is bounded above because the functions $f_i$ are smooth and the set $\dom$ is compact.
Substituting \eqref{eq: dummy_non_convex1} into \eqref{eq: algo1-nonconvex 2} and using $\eta=T^{-\frac{2}{3}},\lambda=\lambda_0 T^{\frac{1}{3}}, \hat{L}:=\frac{L}{n}+\lambda$, we get
\begin{align} \label{eq: nonconvex conv rate 1}
\frac{1}{T} \sum_{t=1}^{T} \sgap(\X^t) &\leq
\frac{ \mathcal{E}}{T\cdot T^{-\frac{2}{3}}}+\frac{(L/n+\lambda_0 T^{\frac{1}{3}})T^{-\frac{2}{3}}}{2}   nD^2
\nonumber \\
&=\left( \mathcal{E} +\frac{nD^2 \lambda_0 }{2}\right)  \frac{1}{T^{\frac{1}{3}}} +\frac{L D^2 }{2} \frac{1}{T^{\frac{2}{3}}}.
\end{align}
We complete the first part of the proof by noting that $\sgap(\X^{\tilde{t}}) :=\min_{1\leq t \leq T} \sgap(\X^t) \leq \frac{1}{T} \sum_{t=1}^T \sgap(\X^t)$.

\vspace{1em}
\noindent
\textbf{(b)}~
In the second part, our goal is to show that $\dist(\Xt,\C)$ is converging to $0$. 
We start with the following bound:
\begin{align}
\sgap(\Xt)&=
\max_{\mathbf{U} \in \dom^n}\langle  \nabla \hat{F}_{T} ({\Xt}) , \Xt - \mathbf{U}   \rangle
\nonumber \\
&\geq
\langle  \nabla \hat{F}_{T} (\Xt) , \Xt - \Xbt  \rangle
\nonumber \\
&=
\langle 
\nabla \hat{F} (\Xt) + \lambda (\Xt - \Xbt)
, \Xt- \Xbt \rangle
\nonumber \\
&=
\langle 
\nabla \hat{F} (\Xt) 
, \Xt- \Xbt \rangle +\lambda \norm{\Xt- \Xbt}_F^2.
\end{align}
Then, by using the Cauchy-Schwarz inequality, and the gradient bound in \Cref{lem:sec: appendix: SmoothnessOfhhat}, we can write
\begin{equation} \label{eq: nonconvex distance 2}
\sgap(\Xt) 
\geq -G \, \dist(\Xt,\C) + \lambda \, \dist^2(\Xt,\C),
\end{equation}
where we also used the fact that $\dist(\Xt,\C) = \norm{\Xt- \Xbt}_F$. 
By solving this quadratic inequality, we get
\begin{equation}
    \dist(\Xt,\C) \leq \frac{G}{\lambda} + \sqrt{\frac{\sgap(\Xt)}{\lambda}}.
\end{equation}
We denote $\tilde{t} \in \mathrm{argmin}_{1\leq t \leq T} ~ \sgap(\Xt)$. Then, for iteration $\tilde{t}$, by using \eqref{eq: nonconvex conv rate 1}, we can write
\begin{equation}
    \dist(\Xtt,\C) \leq \frac{G}{\lambda} + \sqrt{\frac{1}{\lambda} \left( \Big( \mathcal{E}+\frac{nD^2 \lambda_0 }{2}\Big)  \frac{1}{T^{\frac{1}{3}}} +\frac{L D^2 }{2} \frac{1}{T^{\frac{2}{3}}}\right)}.
\end{equation}
Plugging $\lambda=\lambda_0 T^{\frac{1}{3}}$, we get
\begin{align}
    \dist(\Xtt,\C) 
    \leq \frac{G}{\lambda_0 T^{\frac{1}{3}}} + \sqrt{ \left( \Big( \frac{\mathcal{E}}{\lambda_0}+\frac{nD^2}{2}\Big)  \frac{1}{T^{\frac{2}{3}}} +\frac{L D^2 }{2 \lambda_0} \frac{1}{T}\right)}. \label{eq:nonconvex-bound-dist}
\end{align}
This completes the second part of the proof. 

\vspace{1em}
\noindent
\textbf{(c)} 
Finally, we combine the results from the previous two parts to prove Theorem~2. 
We start with
\begin{align}
\sgap(\Xtt) 
&=
\max_{\mathbf{U} \in \dom^n}\langle  \nabla \hat{F}_{T} ({\Xtt}) , \Xtt- \mathbf{U}   \rangle
\nonumber \\
&=\max_{\mathbf{U} \in \dom^n}  \big\{ \ip{\nabla \hat{F}(\Xtt)}{ \Xtt-\mathbf{U} 
}
+\lambda 
\ip{\Xtt-\Xbtt}{ \Xtt-\mathbf{U} }
\big\}
\nonumber \\ 
&=
\max_{\mathbf{U} \in \dom^n}  \big\{ \ip{\nabla \hat{F}(\Xtt)}{ \Xtt-\mathbf{U} 
}
+\lambda 
\ip{\X^{\tilde{t}}-\Xb^{\tilde{t}}}{ \Xb^{\tilde{t}}-\mathbf{U} }
\big\}
+
\lambda \norm{\Xtt-\Xbtt}_F^2
\nonumber \\ 
& \geq
\max_{\mathbf{U} \in \dom^n}  \big\{ \ip{\nabla \hat{F}(\X^{\tilde{t}})}{ \X^{\tilde{t}}-\mathbf{U} 
}
+ \lambda \ip{\Xtt-\Xbtt}{ \Xbtt-\mathbf{U} } \big\}.
\end{align}
Note that $\Xbtt = \proj_{\C}(\Xtt)$. Hence, from the variational inequality formulation of projection, we have
\begin{equation}
    \ip{\Xtt-\Xbtt}{ \Xbtt-\mathbf{U} } \geq 0, \quad \forall \mathbf{U} \in \C.
\end{equation}
Combining the last two inequalities, we can write
\begin{align}
\sgap(\Xtt) 
& \geq \max_{\mathbf{U} \in \dom^n \cap \C} \, \ip{\nabla \hat{F}(\Xtt)}{ \Xtt-\mathbf{U} } \nonumber \\
& = \max_{\mathbf{U} \in \dom^n \cap \C} \, \big\{ \ip{\nabla \hat{F}(\Xbtt)}{ \Xbtt-\mathbf{U} } + \ip{\nabla \hat{F}(\Xbtt)}{ \Xtt-\Xbtt } + \ip{\nabla \hat{F}(\Xtt) - \nabla \hat{F}(\Xbtt)}{ \Xtt-\mathbf{U} } \big\} \nonumber \\
& \geq \max_{\mathbf{U} \in \dom^n \cap \C} \, \big\{ \ip{\nabla \hat{F}(\Xbtt)}{ \Xbtt-\mathbf{U} } - \norm{\nabla \hat{F}(\Xbtt)}_F \norm{ \Xtt-\Xbtt }_F - \norm{\nabla \hat{F}(\Xtt) - \nabla \hat{F}(\Xbtt)}_F \norm{ \Xtt-\mathbf{U} }_F \big\} \nonumber \\
& \geq \max_{\mathbf{U} \in \dom^n \cap \C} \, \big\{ \ip{\nabla \hat{F}(\Xbtt)}{ \Xbtt-\mathbf{U} } \big\} - (G + LD) \, \dist(\Xtt,\C) \label{eq:nonconvex-gap-gap-dist},
\end{align}
where the second inequality uses Cauchy-Schwarz inequality, and the last inequality is based on the smoothness of $\hat{F}$, boundedness of $\dom^n$, and the boundedness of the gradient (see \Cref{lem:background:def-G}).
Next, we note that 
\begin{align}
    \max_{\mathbf{U} \in \dom^n \cap \C} \,  \ip{\nabla \hat{F}(\Xbtt)}{ \Xbtt-\mathbf{U} } 
    & = \max_{\mathbf{u} \in \dom} \,  \ip{\frac{1}{n} \sum_{i=1}^n \nabla f_i(\xb^{\tilde{t}}) \mathbf{e}_i^\top}{ (\xb^{\tilde{t}}-\mathbf{u})\mathbf{1}^\top } 
    \nonumber \\
    & = \max_{\mathbf{u} \in \dom} \,  \ip{\frac{1}{n} \sum_{i=1}^n \nabla f_i(\xb^{\tilde{t}})}{ \xb^{\tilde{t}}-\mathbf{u}} 
    \nonumber \\
    & = \max_{\mathbf{u} \in \dom} \,  \ip{\nabla F(\xb^{\tilde{t}})}{ \xb^{\tilde{t}}-\mathbf{u}} 
    \nonumber \\
    & = \gap(\xb^{\tilde{t}}). \label{eq:nonconvex-gap-smoothgap-relation}
\end{align}
By combining \eqref{eq:nonconvex-gap-gap-dist} with \eqref{eq:nonconvex-gap-smoothgap-relation} and rearranging, we get
\begin{equation}
    \gap(\xb^{\tilde{t}}) \leq \sgap(\Xtt) + (G+LD) \, \dist(\Xtt,\C).
\end{equation}
Finally, we complete the proof by noting that $\min_{1\leq t \leq T} \gap(\xb^t) \leq \gap(\xb^{\tilde{t}}) \leq \mathcal{O}(1/T^{\frac{1}{3}})$ based on \eqref{eq: nonconvex conv rate 1} and \eqref{eq:nonconvex-bound-dist}.
{\hfill $\square$}

\subsection{Extension of Theorem~2 for Partial Client Participation}
\label{sec:partial-participation-nonconvex}

We assume that each client participates in each round with probability $\texttt{p}$. 
Note that 
\begin{align}
\x_i^{t+1}=
\begin{cases}
 (1-\eta_t) \x_i^t + \eta_t \mathbf{s}_i^t, & \text{with probability } \texttt{p}
\\ 
\x_i^{t} & \text{with probability } 1-\texttt{p}.
\end{cases}
\end{align}
Therefore, $\mathbb{E}_t [\x_i^{t+1}-\x_i^{t}]=0\times (1-\texttt{p})+
\eta_t (\mathbf{s}_i^t-\x_i^t)\times  \texttt{p}$ and $\mathbb{E}_t [\norm{\x_i^{t+1}-\x_i^{t}}^2]=0\times (1-\texttt{p})+
\eta_t^2 \norm{\mathbf{s}_i^t-\x_i^t}^2  \times \texttt{p}$. 
Similarly to the proof of Theorem~2, we start with \eqref{algo1-nonconvex: eq1}. Taking conditional expectation given the history
prior to round $t$, we get     
    \begin{align}
	\mathbb{E}_t \big[ \hat{F}_t(\X^{t+1}) - \hat{F}_t ({\Xt}) \big]
	&\leq \ip{\nabla \hat{F}_t ({\Xt})}{\mathbb{E}_t \big[ \X^{t+1}-\Xt \big]}
	+\frac{\hat{L}}{2} \mathbb{E}_t \big[ \| \X^{t+1}-\Xt \|_F^2]
	\nonumber \\
	&\leq   \eta_t \texttt{p}
	\ip{\nabla \hat{F}_t ({\Xt})}{\mathbf{S}^t-\Xt }
	+\frac{\hat{L} \eta_t^2}{2} n \texttt{p} D^2
	\nonumber \\
	&=  -\eta_t \texttt{p} \mathbb{E}_t \big[ \sgap(\X^t) \big] 
	+\frac{\hat{L}\eta^2_t}{2}   n \texttt{p}D^2.
	\end{align}
Now, by following the steps in \Cref{sec: appendix: theorem 2 subsection 1} with $\eta={(\texttt{p} T+1)}^{-\frac{2}{3}},\lambda=\lambda_0 {(\texttt{p} T+1)}^{\frac{1}{3}}$, we can write
\begin{align} \label{eq: nonconvex conv rate 1 subsection2}
\frac{1}{T} \sum_{t=1}^{T} \ \mathbb{E}_t \big[ \sgap(\X^t) \big] 
=\mathbb{E} \big[ \sgap(\X^t) \big] 
&\leq \big( \mathcal{E}+\frac{nD^2 \lambda_0 }{2}\big)  \frac{1}{(\texttt{p} T)^{\frac{1}{3}}} +\frac{L D^2 }{2} \frac{1}{(\texttt{p} T)^{\frac{2}{3}}},
\end{align}
where $\mathbb{E}[\cdot]$ is the expectation with respect to the all randomness. 
The rest of the proof follows similarly.
{\hfill $\square$}

\section{Convergence Analysis of \textsc{FedFW} in the Stochastic Case} \label{sec:appendix-fw-sto}

This section presents the convergence guarantee of \textsc{FedFW-sto} for the convex case, as shown in Algorithm~1. 
Here, we assume access to stochastic gradients that satisfy the bounded variance assumption.

\textsc{FedFW-sto} can be viewed as an instance of the Stochastic Homotopy Conditional Gradient Method (SHCGM) as described in \cite{locatello2019stochastic}, applied to the stochastic FL template presented in Section 4.1 of our main paper. 
Accordingly, we have the following lemma, directly from Theorem~2 in \cite{locatello2019stochastic}.

\begin{lemma}\label{sec:appendix:sto-bounds-lem}
Denote by $\hat{F}(\mathbf{X}) := \frac{1}{n} \sum_{i=1}^n \mathbb{E}_{\omega_i} \big[ 
f_i(\mathbf{X} \mathbf{e}_i, \omega_i) \big]$. 
Suppose each $f_i(\x) := \mathbb{E}_{\omega_i} \big[f_i(\x, \omega_i) \big]$ is convex and $L$-smooth. 
Further assume that the bounded variance assumption (see Assumption~1 in the main paper) holds. 
Then, the sequence $\Xt$ generated by \textsc{FedFW-sto} satisfies 
\begin{align}
\label{sec:appendix:sto-bounds:eq1}
&\mathbb{E} \big[ \hat{F}(\Xt) \big] -\hat{F} ({\X^*})
 \leq 9^{\frac{1}{3}} \frac{C}{(t+7)^\frac{1}{3}}
\\
\label{sec:appendix:sto-bounds:eq2}
&\mathbb{E} \big[\dist^2(\Xt,\C) \big]  \leq \frac{2 \norm{\Y^\star}_F}{\lambda_0 \sqrt{t+7}}
+
\frac{2 ~ \sqrt{2 \cdot 9^{\frac{1}{3}} C/\lambda_0 }}{(t+7)^{\frac{5}{12}}},
\end{align}
 where $C = \frac{81}{2} nD^2(L/n+\lambda_0)+9D\sqrt{Q}, \ Q = \max \big \{ \nsq{\nabla \hat{F} ({\X^1})-\D^1}_F 7^{2/3}, 16 n \sigma^2+ 81 L^2 D^2 / n \big\}$, $\D^1$ is the matrix of concatenated $\bd_i^1$, and $\Y^\star$ is a solution to the Lagrange dual problem.
\end{lemma}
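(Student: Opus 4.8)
The plan is to recognize \textsc{FedFW-sto}, when applied to the lifted consensus reformulation \eqref{eqn:sec:algo. and convergence anal.:composite equivalent}, as a verbatim instance of the Stochastic Homotopy Conditional Gradient Method (SHCGM) of \cite{locatello2019stochastic}, and then to obtain \eqref{sec:appendix:sto-bounds:eq1}--\eqref{sec:appendix:sto-bounds:eq2} by instantiating their Theorem~2 with the data of our matrix problem. First I would set up the dictionary between the two algorithms. In \eqref{eqn:sec:algo. and convergence anal.:composite equivalent} the smooth part is $\hat{F}(\X)=\frac1n\sum_i \mathbb{E}_{\omega_i}[f_i(\X\mathbf{e}_i,\omega_i)]$ and the nonsmooth part is $\delta_\C$, which SHCGM smooths by the quadratic penalty $\frac{\lambda_t}{2}\dist^2(\X,\C)$ (their linear map is the identity, so smoothing $\delta_\C$ yields this Moreau-type envelope with $\beta_t=\lambda_t$). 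I would then check line by line that the three ingredients coincide: the momentum estimator $\bd_i^{t+1}=(1-\rho_t)\bd_i^t+\rho_t\frac1n\nabla f_i(\x_i^t,\omega_i^t)$, stacked into $\D^{t+1}$, is exactly SHCGM's variance-reduced gradient tracker for $\hat{F}$; the combined direction $\g_i^t=\bd_i^{t+1}+\lambda_t(\x_i^t-\xb^t)$ equals the smoothed-surrogate gradient $\D^{t+1}+\lambda_t(\X^t-\proj_\C(\X^t))$ read column-wise; and the LMO over $\dom^n$ together with $\X^{t+1}=(1-\eta_t)\X^t+\eta_t\mathbf{S}^t$ is SHCGM's Frank--Wolfe step. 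Finally I would match the schedules $\eta_t=\frac{9}{t+8}$, $\lambda_t=\lambda_0\sqrt{t+8}$, $\rho_t=\frac{4}{(t+7)^{2/3}}$ to those required by their theorem, up to the index shift producing the $t+7$ and $t+8$ offsets.

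Next I would verify that the hypotheses of \cite[Theorem~2]{locatello2019stochastic} hold for the lifted problem. Convexity of $\hat{F}$ is inherited from the convexity of each $f_i$. The smoothness constant of the smooth part is $L/n$: this is precisely the first term in the bound of \Cref{lem:sec: appendix: SmoothnessOfhhat} (the penalty contributes the separate $\lambda_t$, handled by the homotopy rather than by $\hat{F}$). The remaining input is the variance of the lifted one-sample estimator $\frac1n\sum_i\nabla f_i(\x_i,\omega_i)\mathbf{e}_i^\top$; using orthogonality of the columns $\mathbf{e}_i^\top$ and independence of the $\omega_i$ across clients, its Frobenius variance reduces to a sum of the per-client variances controlled by \Cref{bounded-grad-noise}. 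Together with the diameter of the product domain $\dom^n$, which equals $\sqrt{n}\,D$, and the gradient bound $G$ from \Cref{lem:background:def-G}, these are exactly the quantities entering SHCGM's abstract constants, and substituting them should yield the stated $C$ and $Q$ and the initialization term $\nsq{\nabla\hat{F}(\X^1)-\D^1}_F$.

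The main obstacle is not any single inequality but the careful bookkeeping of the $n$-dependent factors so that the abstract constants of \cite{locatello2019stochastic} collapse to exactly the $C$ and $Q$ claimed. Three separate scalings must be tracked simultaneously and cannot be conflated: the $L/n$ smoothness of $\hat{F}$, the $\sqrt{n}\,D$ diameter of $\dom^n$ (so that diameter-squared contributes the $nD^2$ factors), and the scaling of the stochastic-gradient variance induced by the $\frac1n$ factor in the momentum update together with the sum over the $n$ clients. I would also need to confirm that the constant $Q$ indeed dominates both the transient initialization term and the steady-state variance-plus-smoothness term, as the $\max$ in its definition requires, and that the chosen index offsets are the ones under which their recursion telescopes. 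Once this accounting is complete, \eqref{sec:appendix:sto-bounds:eq1} and \eqref{sec:appendix:sto-bounds:eq2} follow by direct quotation of their Theorem~2.
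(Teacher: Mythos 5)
Your proposal takes essentially the same route as the paper: the paper's own proof consists of observing that \textsc{FedFW-sto} is an instance of SHCGM applied to the lifted problem \eqref{eqn:sec:algo. and convergence anal.:composite equivalent} and quoting Theorem~2 of \cite{locatello2019stochastic} directly, omitting the constant-matching bookkeeping that you propose to carry out explicitly. The only small item you leave implicit that the paper states is the existence of the dual solution $\Y^\star$, which the paper justifies via Slater's condition on the consensus constraint.
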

\Cref{sec:appendix:sto-bounds-lem} is a direct application of Theorem~2 in \cite{locatello2019stochastic}. 
We omit the proof, and refer to the original source for more details. It is worth noting that strong duality holds, assuming that the relative interior of $\dom$ is non-empty, since the problem is convex and Slater's condition is satisfied.

\subsection{Proof of Theorem~3}
Using convexity of $\hat{F}$, we can write
\begin{align}
\mathbb{E}  \big[ \hat{F}(\Xbt) \big] -\hat{F}(\X^*) &\leq 
\mathbb{E}  \big[ \hat{F}(\Xt) \big]-\hat{F}(\X^*)+
\mathbb{E}  \big[  \ip{\nabla \hat{F}(\Xbt)}{\Xbt-\Xt} \big]
\nonumber \\
&\leq
\mathbb{E}  \big[ \hat{F}(\Xt) \big] -\hat{F}(\X^*) +
\mathbb{E}  \Big[ 
\underbrace{\norm{\nabla \hat{F}(\Xbt)}_F}_{\leq G} \cdot \underbrace{\norm{\Xbt-\Xt}_F }_{=\dist^2(\Xt,\C)} \Big]
\nonumber \\
&\leq 9^{\frac{1}{3}} \frac{C}{(t+7)^\frac{1}{3}}
+G \Big(
 \frac{2 \norm{\Y^\star}_F}{\lambda_0 \sqrt{t+7}}
+
\frac{2 ~ \sqrt{2 \cdot 9^{\frac{1}{3}} C/\lambda_0 }}{(t+7)^{\frac{5}{12}}} \Big),
\end{align}
where we used \Cref{lem:background:def-G,sec:appendix:sto-bounds-lem}. 
Finally, we note that this provides a bound for $\mathbb{E}  \big[  F(\xb^t) \big]-F(\x^*)$, since we have the following equality by the definition of $\hat{F}$: 
\begin{align*}\label{appendix: dummy equations no2}
\mathbb{E}  \big[  \hat{F}(\Xbt) \big] -\hat{F}(\X^*) 
&=
\mathbb{E}  \Big[ 
\frac{1}{n} \sum_{i=1}^n \mathbb{E}_{\omega_i} \big[  f_i(\Xbt \mathbf{e}_i, \omega_i^t) \big] \Big]
-\frac{1}{n} \sum_{i=1}^n 
\mathbb{E}_{\omega_i} \big[ f_i(\X^* \mathbf{e}_i, \omega_i) \big]
\\
&= \mathbb{E} \big[  F(\xb^t) \big] -F(\x^*) 
\\ &\leq \mathcal{O}(t^{-{1}/{3}}).
\end{align*}
This completes the proof.
{\hfill $\square$}

\end{document}